\theoremstyle{plain}
\newtheorem{lemma}{Lemma}
\newtheorem{theorem}{Theorem}
\theoremstyle{definition}
\title{Multi-Agent Path Finding Based on Subdimensional Expansion with Bypass}
\author{Qingzhou Liu \quad Feng Wu \\
}
\begin{document}

\maketitle

\begin{abstract}
Multi-agent path finding (MAPF) is an active area in artificial intelligence, which has many real-world applications such as warehouse management, traffic control, robotics, etc. Recently, M* and its variants have greatly improved the ability to solve the MAPF problem. Although subdimensional expansion used in those approaches significantly decreases the dimensionality of the joint search space and reduces the branching factor, they do not make full use of the possible non-uniqueness of the optimal path of each agent. As a result, the updating of the collision sets may bring a large number of redundant computation. In this paper, the idea of bypass is introduced into subdimensional expansion to reduce the redundant computation. Specifically, we propose the BPM* algorithm, which is an implementation of subdimensional expansion with bypass in M*. In the experiments, we show that BPM* outperforms the state-of-the-art in solving several MAPF benchmark problems.
\end{abstract}

\section{Introduction}
\label{sec:introduction}

Multi-agent path finding (MAPF) is a planning problem to find a feasible path for each agent from its distinct start position to its distinct goal position without having any conflict with other agents along the path. MAPF can be applied to many real-world applications, such as warehouse management \cite{b1}, traffic control \cite{b2}, robotics \cite{b3}, etc.

Unfortunately, MAPF has been proved to be NP-complete \cite{b4}. The configuration space of MAPF grows exponentially with the number of agents in the problem. Therefore, the optimal MAPF algorithms are only suitable for solving problems which contain only a small number of agents. Sacrificing the optimality of these algorithms can greatly improve the solving speed and scale to problems with a large number of agents, which motivates several bounded suboptimal MAPF algorithms. However, the optimal MAPF algorithms still have a great research value because most of the suboptimal approaches are variants of the optimal algorithms. Indeed, there exists a tradeoff between the solving speed and the quality of solution if the requirement of optimality can be relaxed in practice.

Among the state-of-the-art optimal MAPF algorithms, subdimensional expansion is one of the lead techniques to search the solution space. Based on subdimensional expansion, M* \cite{b7} and its variants have shown greatly improvement on the ability to solve MAPF problems. Specifically, subdimensional expansion updates the searching space by maintaining collision sets. This decreases the dimensionality of the joint search space and reduces the branching factor. However, it does not make full use of the possible non-uniqueness of the optimal path of each agent. Therefore, the updating of its collision sets may bring many redundant computations.

In this paper, the idea of bypass is introduced into subdimensional expansion to develop a new method for solving MAPF. The main idea of subdimensional expansion with bypass is to try to avoid the conflicts among agents at first in the process of path planning. Then, the dimensionality of relevant sub-search spaces will be increased only when the relevant agents fail to bypass. When there exists no conflict, each agent will follow its individual optimal policy to move. Notably, subdimensional expansion with bypass inherits the completeness and optimality of subdimensional expansion. By the modification of the updating method of collision sets, the redundant computation is reduced.

On the basis of the aforementioned technique, we propose the BPM* algorithm for solving MAPF, which is an implementation of subdimensional expansion with bypass. Given the design of BPM*, we prove that it is complete and optimal. Note that completeness and optimality are two important properties in the study of MAPF algorithms. In the experiments, we compare BPM* with M* and their variants. The experimental results show that, both on the success rates orthe solving speed, our approach has better performance than the state-of-the-art methods in solving MAPF.

\section{Background}

Similar to M*, our method is essentially a variant of A* | a classical search algorithm that has been extended to solving MAPF. When using A* to solve a MAPF instance, the search space is often called $k-agent$ space. In the initial state, each agent stays in its start position, while in the terminal state, all agents will arrive in their goal positions. The MAPF algorithm tries to find a set of pathes, one for each agent from its initial state to the goal state without conflicts.

The vanilla A* has two drawbacks in solving MAPF. The first one is the exponential growth of the search space. Specifically, the search space of A* grows exponentially with the number of agents. Another drawback is the exponential growth of the branching factor. Since each agent can take $b(b\ge 1)$ actions at one time step, when there exist $n$ agents, the branching factor is equal to $b^n$. To improve the performance of A*, several improvements have been proposed in the MAPF literature. Next, we will briefly review four leading approaches based on A* as follows.

Operator Decomposition (OD) \cite{b5} improves the defect of exponential growth of the branching factor in A* search. When expanding the initial node $q_I$, OD only allows agent $a^1$ to move. Then $b$ successor nodes of $q_I$ are generated. In these successor nodes, only $a^1$ moves, the rest $n-1$ agents remain in the same positions as in the predecessor node. These successor nodes are then added into the {\it open-list}. When expanding these nodes, only agent $a^2$ are allowed to move, another $b$ successor nodes are generated. The new generated successor nodes represent all possible distribution of $a^1$ and $a^2$ at time step 1. It repeats these steps until the goal node $q_F$ is generated. In the search tree of OD, only nodes on levels whose number is divisible by $n$ can represent the distribution of all agents at the same time. Such nodes are called full nodes and the others are called intermediate nodes.

Enhanced Partial Expansion (EPE) \cite{b6} also improves the defect of exponential growth of the branching factor in the A* search. When expanding a node, it only generates a part of the successor nodes that A* generates. These generated nodes are then added into the {\it open-list}. In EPE, each node $q$ has an offset $f_{off}$, the initial value of $f_{off}$ is 1. When expanding $q$, only nodes whose heuristic function value is equal to $f(q)+f_{off}$ will be generated, after which, $f_{off}=f_{off}+1$. When there exists at least one node whose heuristic function value is greater than or equals to $f(q)+f_{off}$, $q$ will be added into the {\it open-list} again.

Independence Detection (ID) \cite{b5} improves the defect of exponential growth of the search space, which is essentially an independent detection among agents. It attempts to decompose a MAPF problem with $n$ agents into multiple subproblems with fewer agents. At first, each agent plans its own independent optimal path without considering the existence of other agents. Then it will detect the independence of the $n$ independent optimal pathes planned by $n$ agents. If there is a conflict between any two pathes, it will merge the corresponding agents into a group, and continue to detect and merge until there is no conflict between any two path groups. The detection then terminates and the path finding for each group of agents is solved as a subproblem.

M* \cite{b7} improves the two defects at the same time. It is also the building block of this work. The main idea of M* is subdimensional expansion, which will be described in details next.

\section{Subdimensional Expansion with Bypass}

In this section, we first introduce the basic concept of subdimension expansion. Based on it, we then propose our new method of adding bypass to it.

\subsection{Subdimensional Expansion}

Subdimension expansion is a general framework for solving the MAPF problem. In this framework, each agent $a^i$ owns its individual optimal policy $\phi^i:Q^i\rightarrow TQ^i$, which maps an agent's position to its move. For $a^i$, moving from any position $q_k^i\in Q^i$, if follows $\phi^i$, an optimal path to $q_F^i\in Q^i$ can be found, which can be denoted as $\pi_{\phi}^i(q_k^i,q_F^i)$.

\begin{figure}[t]
	\centering\small
	\subfloat[A MAPF instance]{
		\includegraphics[width=0.3\linewidth]{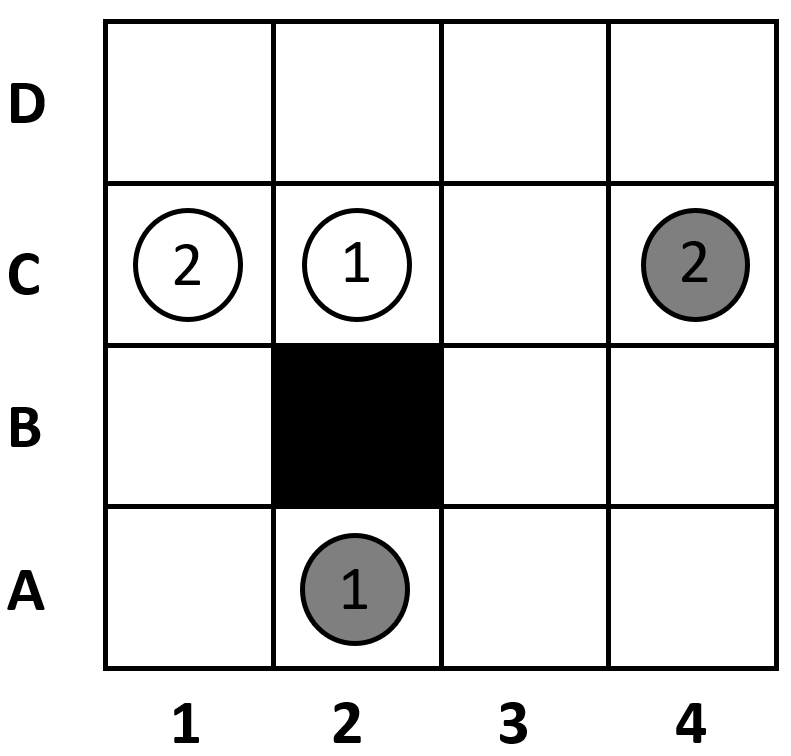}}
	\label{1a}\hfill
	\subfloat[Initial policy of Agent $a^1$]{
		\includegraphics[width=0.3\linewidth]{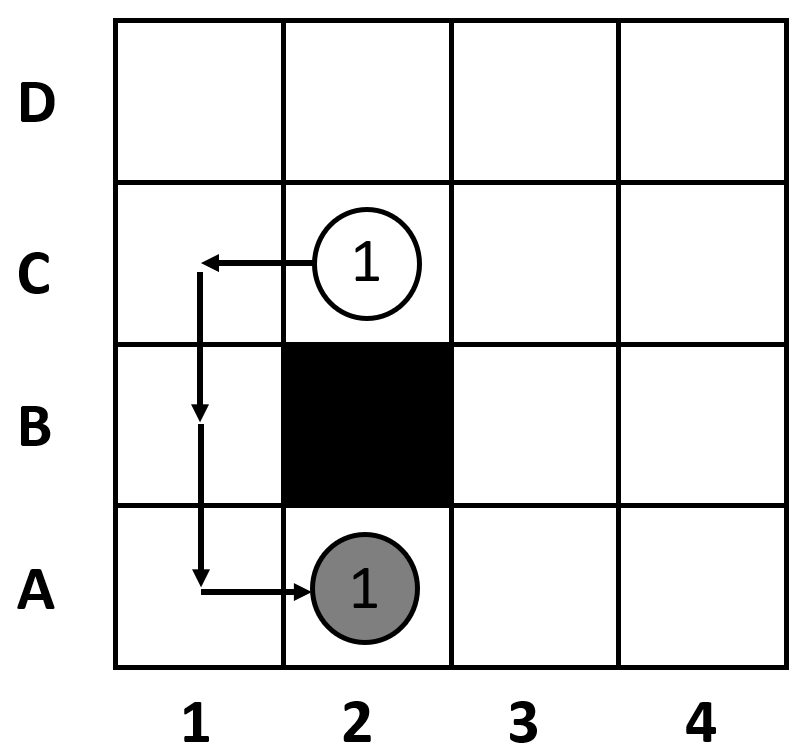}}
	\label{1b} \\
	\subfloat[Initial policy of Agent $a^2$]{
		\includegraphics[width=0.3\linewidth]{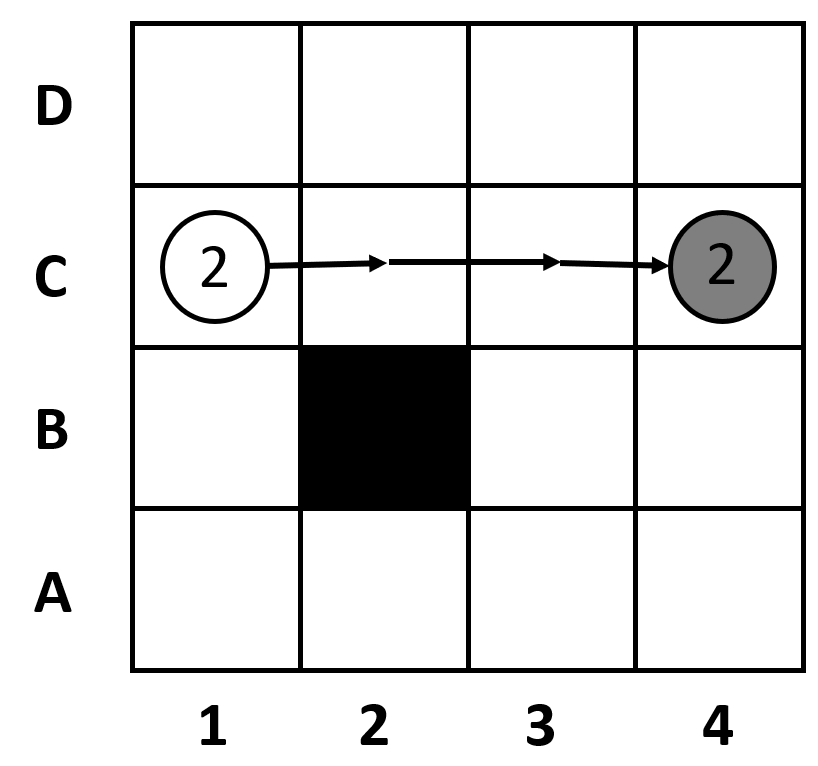}}
	\label{1c} \hfill
	\subfloat[Search tree]{
		\includegraphics[width=0.3\linewidth]{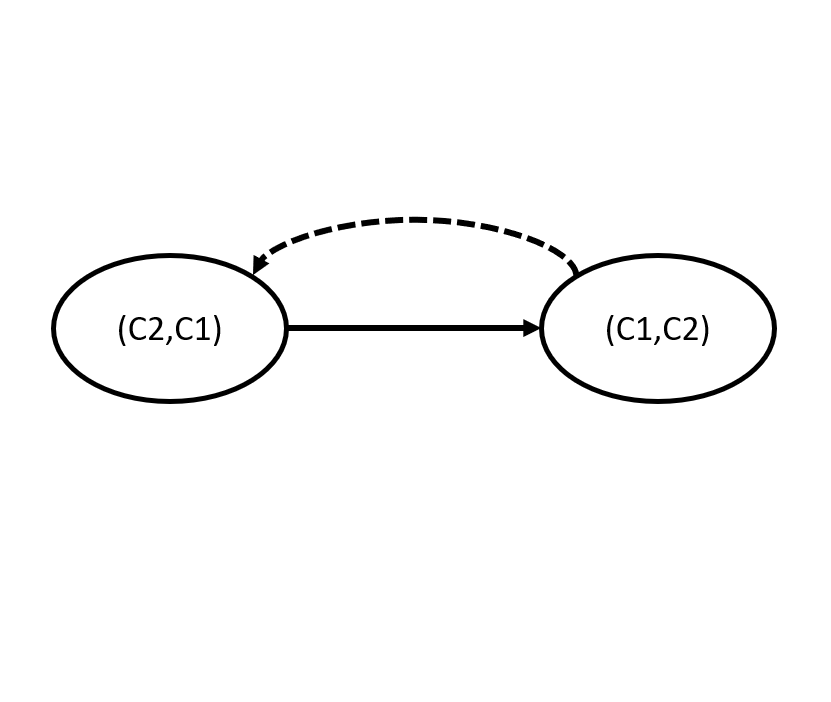}}
	\label{1d}
	\caption{The initial policy and search tree.}
	\label{fig1}
\end{figure}

Let $I$ denote the set of all agents. For the subset $\Omega \subseteq I$, the policies of agents in this subset are denoted as  $\phi^{\Omega}(q^{\Omega})=\prod_{i\in \Omega}\phi^i(q^i)$. When $\Omega=I$, we have $\phi(q)=\prod_{i\in I}\phi^i(q^i)$. Similarly, given that $\pi_{\phi}^{\Omega}(q^{\Omega},q_F^{\Omega})$ denotes the pathes induced by $\phi^{\Omega}$, $\pi_{\phi}(q,q_F)$ are the pathes induced by$\phi$. Let $f(\pi_{\phi}(q_k,q_F))$ denote the cost of the path, we can know that $\forall \pi(q_k,q_F), f(\pi(q_k,q_F))\ge f(\pi_{\phi}(q_k,q_F))$.

When planning its path, each agent $a^i$ believes that $\pi_{\phi}^i(q_k^i,q_F^i)$ will not conflict with other agents' pathes, until encountering a conflict. In subdimensional expansion, $q_k\in Q$ needs to maintain a conflict set $C_k$. $C_k$ includes agents which don't hold the belief that their individual optimal pathes have no conflicts with other pathes at $q_k$. Let $\Pi(q_k)$ denotes all the searched pathes which pass through $q_k$, the function $\Psi()$ calculates the conflicts, then $C_k$ can be defined as:
\begin{equation}
C_k=\bigcup_{\pi\in\Pi(q_k)}\Psi(\pi)
\end{equation}
Thus, agents in the conflicts which are induced by the pathes starting from $q_k$ are included in $C_k$.

For the instance shown in Figure 1(a), the white circles denote the start positions of two agents $a^1$ and $a^2$, while the dark circles represent the goal positions. Figure 1(b) is the initial optimal policy of $a^1$. Figure 1(c) is the initial optimal policy of $a^2$. At time step 1, the root of search tree has only one successor. The search tree is shown in Figure 1(d).

For agents not in $C_k$, i.e., $\bar{C}_k=I\backslash C_k$, they still hold the belief that their individual optimal pathes have no conflicts. Agents in $C_k$ do not hold that belief. Let $Q^{\#}\subseteq Q$ represent the search space constructed by subdimensional expansion. These constraints of $q_k$ can be translated as:
\begin{equation}
T_{qk}Q^{\#}=t^{\bar{C}}(q_k)\times \prod_{i\in C_k}T_{q_k^i}Q^i
\end{equation}

Agents in $\bar{C}_k$ will move at the direction of $t^{\bar{C}}(q_k)$,  which represents the tangential direction of  $\pi_{\phi}^{\bar{C}_k}(q_k^{\bar{C}_k},q_F^{\bar{C}_k})$ at $q_k$. Hence these agents can only follow their individual optimal path to move. Agents in $C_k$ need to do exhaustive search, by trying all the possible directions, $T_{q_k^i}Q^i$. In the instance of Figure 2(a), after the collision of the initial policies of two agents, $a^1$ and $a^2$ need to do exhaustive search. Their new policies are shown in Figure 2(b) and 2(c). The root now has 16 successors and the search tree is shown in Figure 2(d).

\begin{figure}[t]
	\centering\small
	\subfloat[A MAPF instance]{
		\includegraphics[width=0.3\linewidth]{SE1.png}}
	\label{1a} \hfill
	\subfloat[Exhaustive policy of Agent $a^1$]{
		\includegraphics[width=0.3\linewidth]{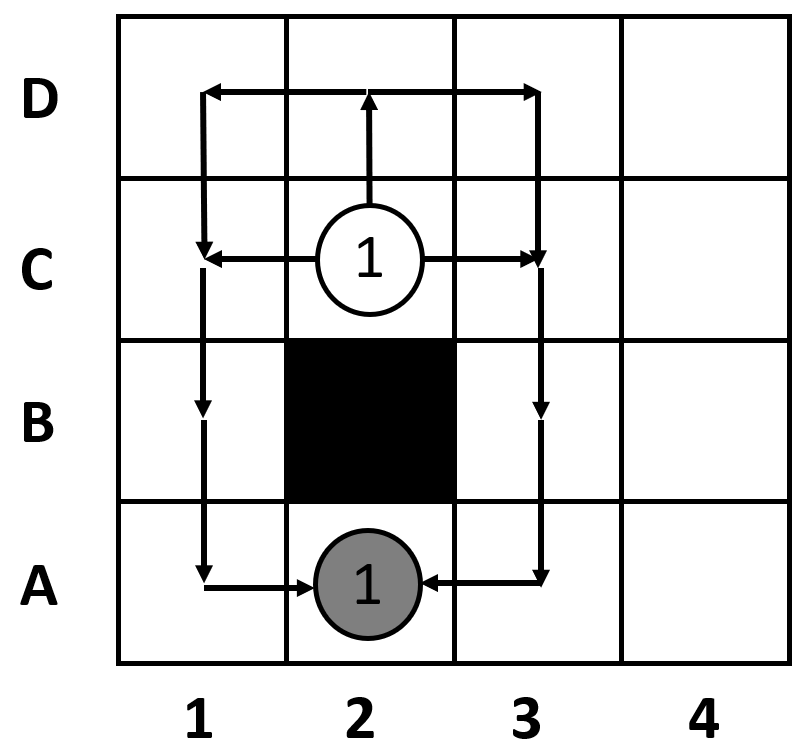}}
	\label{1b}\\
	\subfloat[Exhaustive policy of Agent $a^2$]{
		\includegraphics[width=0.3\linewidth]{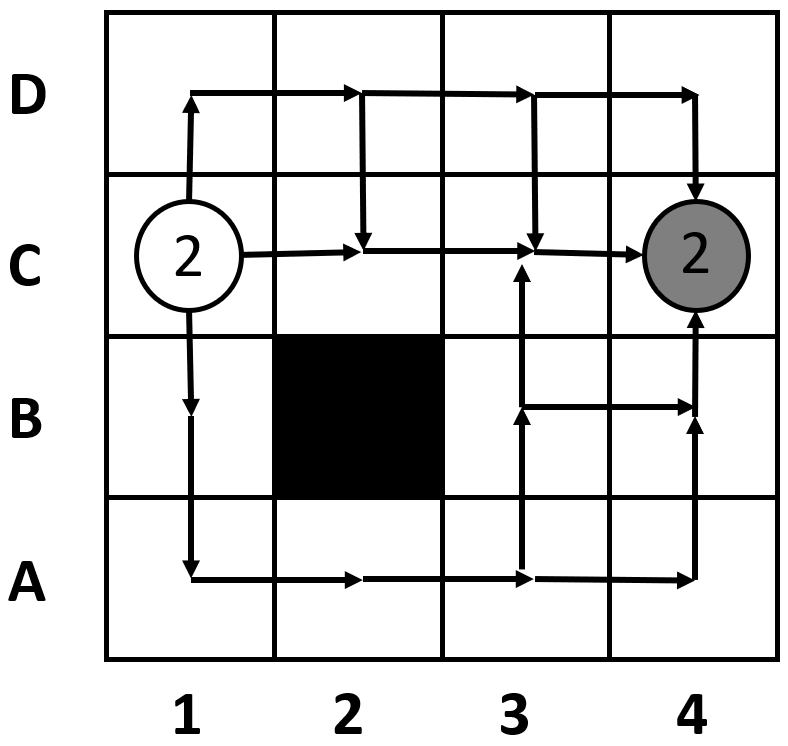}}
	\label{1c}\hfill
	\subfloat[Search tree]{
		\includegraphics[width=0.3\linewidth]{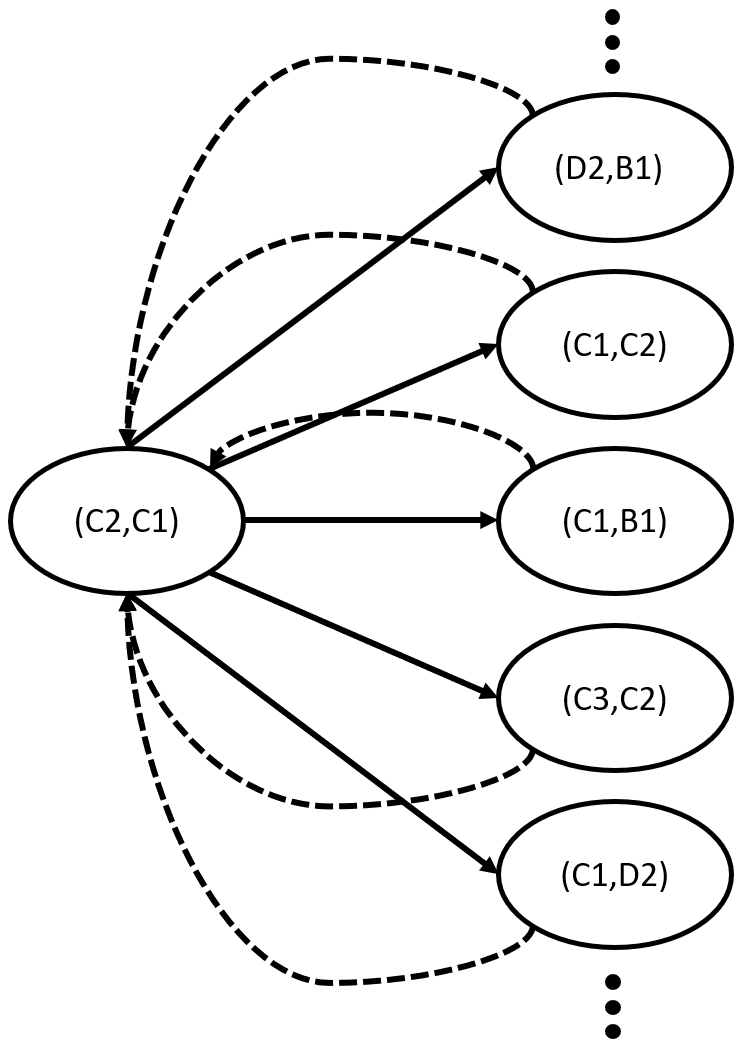}}
	\label{1d}
	\caption{Exhaustive policy and search tree.}
	\label{fig1}
\end{figure}

Starting from $q_I$, by extending $Q^{\#}$, subdimensional expansion will finally finish the construction of $Q^{\#}$. At first, $Q^{\#}$ is equal to $\pi_{\phi}(q_I,q_F)$. During multiple modifications of $Q^{\#}$, the global optimal path will be found.

\subsection{Adding Bypass}

Now, we turn to our method of adding bypass to subdimensional expansion. When $\Psi(\pi_{\phi}(q_k,q_F))\ne \emptyset$, this means there exists at least one collision between two agents $a^i,a^j$ where $i\ne j\in I$. In M*, these two agents will be added into the collision set $C_k$. In contrast, we use bypass to resolve the collisions instead of adding agents into $C_k$ directly.

Given a point $q_k^i\in Q^i$, we have an individual optimal path $\pi_{\phi}(q_{t}^i,q_F^i)$, where $q_t^i$ is the start point of the individually optimal path which pass through $q_k^i$. When this path has conflicts with other agents' individual optimal pathes, if there exists another path $\pi_{\phi'}(q_t^i,q_F^i)\ne \pi_{\phi}(q_t^i,q_F^i)$ and $f(\pi_{\phi'}(q_t^i,q_F^i))=f(\pi_{\phi}(q_t^i,q_F^i))$ and $\pi_{\phi'}(q_t^i,q_F^i)$ has no conflicts with other agents' individual optimal pathes, we can use $\pi_{\phi'}(q_t^i,q_F^i)$ to replace $\pi_{\phi}(q_t^i,q_F^i)$. In this paper, we call this special operation as bypass.

We denote $q_{k'}$ as the predecessor of $q_k$ that has the least cost among all the predecessors of $q_k$. Thus $C_{k'}$ is the collision set of $q_{k'}$. For each collision in $\Psi(\pi_{\phi}(q_k,q_F))$, we can category them into three kind of collisions: 1) Unavoidable Collision (UC), 2) Half-avoidable Collision (HC), and 3) Avoidable Collision (AC). Next, we will explain each of them in more details.

Here, we use UC to denote a collision, in which the two involved agents $a^i,a^j$ are in $C_{k'}$. In a UC, since the two colliding agents are in $C_{k'}$, which means that they are performing exhaustive search at $q_k$, namely, they are not on their individually optimal pathes, we have no need to repair this kind of collisions.

In an HC, one of the two colliding agents $a^i$ is in $C_{k'}$ and another agent $a^j$ is not. Similar to agents in UC, $a^i$ has no need to bypass this collision. For agent $a^j$, it will try to bypass this collision. If $a^j$ can bypass this collision, we will only add $a^i$ into $C_k$ because this collision is avoidable. If $a^j$ fails to bypass it, we will add both of them into $C_k$.

In an AC, both agents $a^i$ and $a^j$ are not in $C_{k'}$. At first, $a^i$ will try to bypass. If $a^i$ or $a^j$ can bypass this collision, so this collision is avoided, we have no need to add them into $C_k$. If both of them fail, so this collision cannot be avoided, we have to add them into $C_k$.

\begin{figure}[t]
	\centering
	\subfloat[A MAPF instance]{
		\includegraphics[width=0.3\linewidth]{SE1.png}}
	\label{1a}\hfill
	\subfloat[Bypass policy of Agent $a^1$]{
		\includegraphics[width=0.3\linewidth]{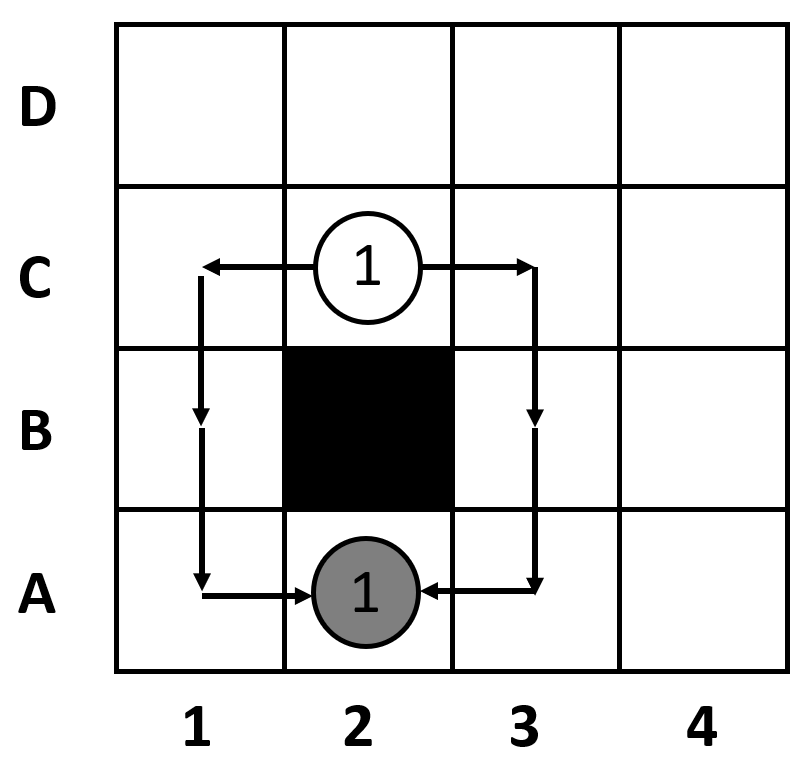}}
	\label{1b}\\
	\subfloat[Bypass policy of Agent $a^2$]{
		\includegraphics[width=0.3\linewidth]{SE4.png}}
	\label{1c}\hfill
	\subfloat[Search tree]{
		\includegraphics[width=0.3\linewidth]{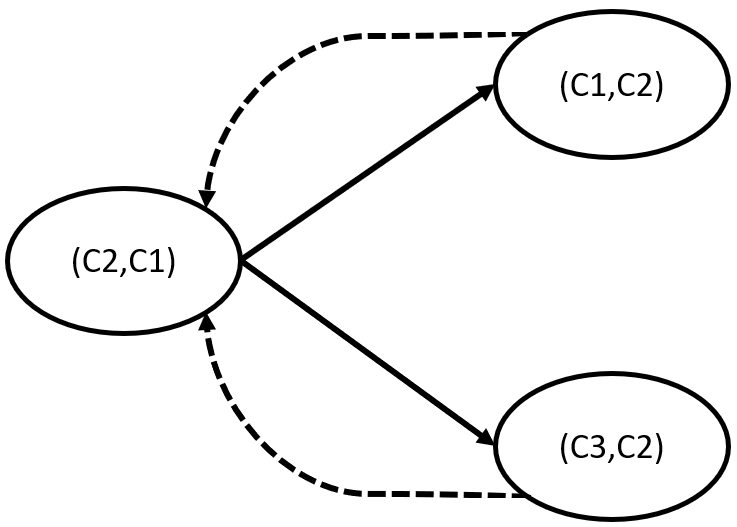}}
	\label{1d}
	\caption{Bypass policy and search tree.}
	\label{fig1}
\end{figure}

For the instance in Figure 3(a), by adding bypass into subdimensional expansion, the two agents have no need to do exhaustive search. Specifically, agent $a^2$ has no need to change its initial policy while agent $a^1$ need to bypass. The root now only has two successors, which is far smaller than the ones in the original subdimensional expansion.

\section{The BPM* Method}

In this section, we propose BPM*, which is an implementation of subdimensional expansion with bypass for solving MAPF. Given a MAPF problem, the map of this instance is denoted as $G$. The search space of each agent $a^i$ is denoted as  $G^i=\{V^i,E^i\}$. $V^i$ represents the set of vertices in $G^i$. $E^i$ represents the set of edges in $G^i$. Thus, the global search space can be denoted as $G=\{V,E\}=\prod_{i\in I}G^i$. Here, $v_I$ represents the start vertex of all agents in $G$ and $v_F$ represents the goal vertex.

\begin{algorithm}[t]\small
	\KwData{$v_k$,$C_l$,open \\
		$v_k$ - vertex in the backpropagation set of $v_l$\\
		$C_l$ - the collision set of $v_l$\\
		open - the open list for BPM*}
	\If{$C_l \nsubseteq C_k$}{$C_k\leftarrow C_k\cup C_l$}
    \If{$\neg(v_k\in open)$}{open.insert($v_k$)}
	\{If the collision set changed, $v_k$ must be re-expanded\}\\
	\For {$v_m\in v_k$.back\_set}
	{\textbf{backprop}($v_m$,$C_k$,open)}
	\caption{backprop}
	\label{algo:algorithm1}
\end{algorithm}

The overall structure of BPM* is similar to M*, while the main difference between them is the method of tackling conflicts. Specifically, in BPM*, conflict set is updated by the following way. When a conflict cannot be resolved by bypass, BPM* will backprop to spread this conflict to its predecessors. The main procedures of backpropagation are shown in Algorithm 1.

\begin{algorithm}[t]\small
	\KwData{$v_k$,$v_l$,open\\
		$v_k$ - vertex in the backpropagation set of $v_l$\\
		open - the open list for BPM*}
	$C_{new} = \emptyset$,
	$new\_path\_set = \emptyset$\\
	\For {$\{a^i,a^j\}\in \Psi(v_l)$}
	{$v_{ki} = $ \textbf{back\_to\_start\_point}($v_k$,$a^i$)\\
		\If {find $\pi_{\phi'}(v_{ki}^i,v_F^i)$ bypasses $\pi_{\phi}(v_{ki}^i,v_F^i)$}
		{$new\_path\_set$.insert($\pi_{\phi'}(v_{ki}^i,v_F^i)$)\\
			\textbf{continue}
		}
		$v_{kj} = $ \textbf{back\_to\_start\_point}($v_k$,$a^j$)\\
		\If {find $\pi_{\phi'}(v_{kj}^j,v_F^j)$ bypasses $\pi_{\phi}(v_{kj}^j,v_F^j)$}
		{$new\_path\_set$.insert($\pi_{\phi'}(v_{kj}^j,v_F^j)$)\\
			\textbf{continue}
		}
		$C_{new}$.insert($\{a^i,a^j\}$)
	}
	\If {$C_{new}==\emptyset$}
	{\textbf{generate\_new\_path}($v_l$,$new\_path\_set$,open)}
	\textbf{return} $C_{new}$
	\caption{bypass}
	\label{algo:algorithm1}
\end{algorithm}

Algorithm 2 shows the implementation of bypass in BPM*. At the beginning of bypassing, we need to maintain a conflict set $C_{new}$ and a new path set $new\_path\_set=\emptyset$. $v_l$ is the successor vertex of $v_k$. For each conflict $\{a^i,a^j\}$ in $\Psi(v_l)$, we will try to use bypass to resolve it. If the corresponding conflict is failed to be resolved, $\{a^i,a^j\}$ will be added into $C_{new}$. When $a^i$ bypassing, it will try to find $v_{ki}$, the starting vertex of the current individual optimal path, by backtracking. Here, the implementation of backtracking is shown in Algorithm 3. When finding $v_{ki}$, we can use single agent cost constrained path planning algorithms to find a bypass for resolving the conflict. If it works, the new path will be added into $new\_path\_set$. After trying to resolve all the conflicts, we need to check whether $C_{new}$ is empty. If empty, only the $new\_path\_set$ should be added into $Q^{\#}$. This process is called $generate\_new\_path$. Finally, $C_{new}$ will be returned.

At the beginning of BPM*, each agent computes its individual optimal policy $\phi^i:V^i\leftarrow V^i$, $\phi^i(v^i)$ represents the successor vertex of $v^i$ which is induced by the policy $\phi^i$. $\phi^i$ can be generated by any kind of pure single agent path finding algorithms. The path induced by $\phi^i$ of $a^i$ can be denoted as $\pi_{\phi}^i(v^i,v_F^i)$. Thus, $\phi:V\leftarrow V$ represents the policy consisted of each agent's individual optimal policy. The generated global path can be represented as $\pi_{\phi}(v,v_F)$. Since the individual optimal path is precomputed, it is suitable for BPM* to take SIC as the heuristic function. The SIC cost of any vertex $v_k$ equals to the sum of the cost of each individual optimal path:
\begin{equation}
f(v_k)=f(\pi_{\phi}(v_k,v_F))\le f(\pi_{*}(v_k,v_F))
\end{equation}

The overall implementation of our BPM* approach is shown in Algorithm 4.

\begin{algorithm}[t]\small
	\KwData{$v_k$,$a^i$}
	$v_t$ = $v_k$.back\_ptr\\
	\If {$v_t$ == $\emptyset$ \textbf{or} $a^i \in C_t$}
	{\textbf{return} $v_k$
	}
	\textbf{\textbf{return} back\_to\_start\_point}($v_t$,$a^i$)
	\caption{back\_to\_start\_point}
	\label{algo:algorithm1}
\end{algorithm}

Considering the bypassing and backpropagation operators, when no conflict exists, BPM* is the same as A* on $G^{sch}$. $G^{sch}$ is the subgraph of $G$, and also the global search space of BPM*. $G^{sch}$ consists of three subgraphs, $G^{exp}$, $G^{nbh}$, $G^{\phi}$. $G^{exp}$ represents the part of $G$ which BPM* has searched. $G^{nbh}$ represents the limited neighbours of vertices in $G^{exp}$. $G^\phi$ represents pathes induced by $\phi$ which start from vertices in $G^{nbh}$ to $v_F$. In BPM*, $G^{exp}$ can be constructed by introducing the $closed\_list$. $G^{nbh}$ and $G^{\phi}$ are implied in $G^{exp}$ and the conflict sets in $G^{exp}$.

\begin{algorithm}[t]\small
	\For {\textbf{all} $v_k\in V$}{ $v_k$.cost $\leftarrow$ MAXCOST,
		$C_k\leftarrow \emptyset$}
	$v_I$.cost $\leftarrow 0$,
	$v_I$.back\_ptr = $\emptyset$,
	open = \{$v_I$\}\\
	\While {open.empty() == False}
	{$v_k \leftarrow$ open.pop() \{Get cheapest vertex\}\\
		\If {$v_k==v_F$}
		{\textbf{return} back\_track($v_k$)\{Reconstruct the optimal path by following the back\_ptr\}}
		\For {$v_l\in V_k^{nbh}$}
		{$v_l$.back\_set.append($v_k$)\{Add $v_k$ to the back propagation list\}\\
			$C_{new}=$ \textbf{bypass}($v_k$,$v_l$,open)\\
			$C_l\leftarrow C_l\cup C_{new}$\\
			\textbf{backprop}($v_k$,$C_l$,open)\\
			\If {$\Psi(v_l)=\emptyset$ \textbf{and} $v_k$.cost+$f(e_{kl})< v_l$.cost}
			{$v_l$.cost $\leftarrow v_k$.cost+$f(e_{kl})$\\
				$v_l$.back\_ptr $\leftarrow v_k$\\
				open.insert($v_l$)
			}
		}
	}
	\textbf{return} No path exists
	\caption{BPM*}
	\label{algo:algorithm1}
\end{algorithm}

$G^{exp}$ includes the vertices which have been added into the $open\_list$. When expanding $v_k\in G^{exp}$, the limited neighbours of $v_k$ are added into the $open\_list$ and also into $G^{exp}$. Edges linking the fore-mentioned vertices are added into $G^{exp}$. Since $G^{exp}$ includes all expanded vertices, it also includes all the searched pathes. For vertex $v_k$, we have:
\begin{equation}
C_k = \left\{
\begin{aligned}
C_{new}^k\bigcup_{v_l\in V_k} C_{new}^l&, \quad v_k\in G^{exp} \\
\emptyset & ,\quad v_k \notin G^{exp}
\end{aligned}
\right.
\end{equation}
Here, $C_{new}^k$ includes agents of conflicts which can not be resolved. $V_k$ represents vertices which have a least one path to $v_k$ in $G^{exp}$. If $v_k\notin G^{exp}$, BPM* has not expanded $v_k$. Thus $C_k$ is also not computed, $C_k=\emptyset$. Therefore, in $G^{sch}$,  there exists one path, in which a conflict vertex $v_k\in G^{sch}\backslash G^{exp}$ exists. This kind of vertex will also be added into the $open\_list$ and $G^{exp}$. After confirming $C_k\ne \emptyset$, the limited neighbours of $v_k$ will be removed from $G^{sch}$.

$G_k^{nbh}$ is a subgraph of $G^{sch}$, representing the limited neighbours of $v_k\in G^{exp}$. $G_k^{nbh}$ includes $v_k$, $V_k^{nbh}$ and corresponding edges. If let $G^{nbh}=\bigcup_{v_k\in G^{exp}}G_k^{nbh}$, $G^{exp}\subset G^{nbh}$. Considering vertices not in $G^{exp}$ and its conflict set$C_k=\emptyset$. The search will start from $v_k\in G^{nbh}\backslash G^{exp}$, following $\pi_{\phi}(v_k,v_f)$, until reaching $v_F$ or any expanded vertex $v_e\in G^{exp}$. This path can be represented as $\pi_{\phi}(v_k)$ or $G_k^\phi$, which can be defined as $G_{\phi}=\bigcup_{v_k\in G^{nbh}\backslash G^{exp}}G_k^{\phi}$.


\subsection{Completeness and Optimality}

As aforementioned, completeness and optimality are two key properties for MAPF algorithms. Completeness means that when there exists at least one solution, the algorithm will find a feasible solution in finite time, when there exists no solution, it will eventually terminate. Optimality implies that when the problem is solvable, the solution returned by the algorithm must be optimal. Here, we follow the M* \cite{b8} paper to theoretically prove the completeness and optimality of our BPM* algorithm.

\begin{lemma}
	For an individually optimal path $\pi_{\phi}(v_k^i,v_F^i))$, the distinct bypasses of $\pi_{\phi}(v_k^i,v_F^i))$ is finite.
\end{lemma}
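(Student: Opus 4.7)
The plan is to exploit the cost constraint built into the definition of a bypass together with the finiteness of the single-agent graph. By definition, every bypass $\pi_{\phi'}(v_k^i, v_F^i)$ must satisfy $f(\pi_{\phi'}(v_k^i, v_F^i)) = f(\pi_{\phi}(v_k^i, v_F^i))$, i.e.\ have cost equal to the fixed value $f^\star := f(\pi_{\phi}(v_k^i, v_F^i))$. So the set of bypasses is a subset of the set of paths from $v_k^i$ to $v_F^i$ of cost exactly $f^\star$.

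First, I would observe that the single-agent search space $G^i = \{V^i, E^i\}$ is finite, since MAPF is defined on a finite map. Second, I would invoke the standard MAPF cost model in which every edge $e \in E^i$ has a strictly positive cost bounded below by some constant $c_{\min} > 0$ (for unit-cost grids, $c_{\min} = 1$). From this it follows that any path of total cost $f^\star$ contains at most $\lfloor f^\star / c_{\min} \rfloor$ edges, so all bypasses have length bounded by a finite constant $L := \lfloor f^\star / c_{\min} \rfloor$.

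Third, I would bound the number of walks of length at most $L$ in $G^i$. Since $|V^i|$ is finite and the out-degree is bounded by some $d < \infty$, the number of walks starting at $v_k^i$ of length at most $L$ is at most $\sum_{\ell=0}^{L} d^{\ell}$, which is finite. The set of distinct bypasses is a subset of these walks, hence finite, as required.

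The only genuine obstacle is the tacit assumption that edge costs are strictly positive: if the cost model permitted zero-cost edges (e.g., free wait actions along the path), a bypass could contain arbitrarily long zero-cost detours while preserving total cost $f^\star$, and the bound on path length would collapse. Since this paper follows the standard MAPF conventions used in the M* literature, each action incurs a positive cost, and the bound $L$ is well-defined; I would make this assumption explicit at the start of the proof to close the argument cleanly.
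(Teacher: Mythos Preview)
Your argument is correct and follows a genuinely different route from the paper. The paper proceeds by contradiction: assuming infinitely many bypasses, it selects $|V^i|+1$ of them and asserts that, being pairwise distinct, each must contain a vertex appearing in none of the others, which would force $|V^i|+1$ distinct vertices in a graph with only $|V^i|$. Your approach instead uses the fixed cost $f^\star$ together with a positive lower bound $c_{\min}$ on edge costs to cap the length of any bypass at $L=\lfloor f^\star/c_{\min}\rfloor$, and then bounds the number of length-$\le L$ walks from $v_k^i$ in the finite graph $G^i$.

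Of the two, yours is the more careful argument. The paper's pigeonhole step (``each bypass owns at least one unique vertex'') does not follow from pairwise distinctness alone---distinct paths can share all of their vertices---so at best the paper is tacitly assuming bypasses are simple paths and relying on the finiteness of vertex subsets, but the inference as stated is not valid. Your route sidesteps this entirely by working with walks and an explicit length bound, and it also surfaces the one substantive hypothesis actually needed (strictly positive edge costs), which the paper leaves implicit. Making that assumption explicit, as you propose, is exactly the right move.
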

\begin{proof}
	Assume that the distinct bypasses of $\pi_{\phi}(v_k^i,v_F^i))$ is infinite. Let $|V^i|$ denotes the number of vertices in $G^i$. Since $G^i$ is a finite graph, $|V^i|$ is a finite number. We can choose $|V^i|+1$ distinct bypasses of $\pi_{\phi}(v_k^i,v_F^i))$. Since any one of them is different from the other $|V^i|$ bypasses, each bypass $\pi_{\phi}(v_k^i,v_F^i))$ owns at least one unique vertex. Then there exist at least $|V^i|+1$ distinct vertices in $G^i$, which results in a contradiction. Thus the assumption is wrong. Therefore, for an individually optimal path $\pi_{\phi}(v_k^i,v_F^i))$, the distinct bypasses of $\pi_{\phi}(v_k^i,v_F^i))$ is finite.
\end{proof}

For each agent, there exists an upper bound number of bypasses. We denote the upper bound for agent $a^i$ as $U^i$ and the maximum upper bound is denoted as $U=max_{1\le i\le n}U^i$.

\begin{lemma}
	Bypass will not change the cost of the optimal path,  $f(\pi_{\phi'}(v_k,v_F))=f(\pi_{\phi}(v_k,v_F))$.
\end{lemma}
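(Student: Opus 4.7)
The plan is to derive the lemma almost directly from the definition of bypass that was introduced earlier in the paper, combined with the fact that the path-cost function $f$ used by BPM* is the Sum of Individual Costs (SIC). First I would restate the bypass operation: a bypass replaces an individual optimal segment $\pi_{\phi}(v_t^i,v_F^i)$ of a single agent $a^i$ by a distinct path $\pi_{\phi'}(v_t^i,v_F^i)$ that satisfies, by definition, $f(\pi_{\phi'}(v_t^i,v_F^i))=f(\pi_{\phi}(v_t^i,v_F^i))$ and has no conflict with the other agents' individual paths. All other agents' paths are left untouched.

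Next I would use the SIC decomposition recorded in Equation (3),
\begin{equation*}
f(\pi_{\phi}(v_k,v_F)) \;=\; \sum_{i\in I} f\!\left(\pi_{\phi}^i(v_k^i,v_F^i)\right),
\end{equation*}
so that after a single bypass applied to agent $a^i$, only the $i$-th summand changes, and that summand is equal by the bypass precondition. Every other summand is unchanged because the policies $\phi^j$ for $j\neq i$ are untouched. Hence the two totals agree, yielding $f(\pi_{\phi'}(v_k,v_F))=f(\pi_{\phi}(v_k,v_F))$.

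Finally, I would extend this to the case where the bypass procedure (Algorithm 2) replaces several individual paths when resolving multiple conflicts at $v_l$: a straightforward induction on the number of agents whose paths are replaced, using the single-agent argument above at each step, gives the global equality. I would also briefly note that the ``start point'' $v_t^i$ produced by \textbf{back\_to\_start\_point} lies on the current individually optimal segment ending at $v_F^i$, so the path from $v_k^i$ to $v_F^i$ is preserved piecewise up to $v_t^i$ and then replaced beyond it with equal-cost alternatives, leaving the sum invariant.

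The only subtle point, and the one I would be most careful with, is ensuring that the equality is claimed at the joint vertex $v_k$ and not just at the local starting vertex $v_t^i$ returned by backtracking; this requires observing that the prefix of $\pi_{\phi}^i$ from $v_k^i$ to $v_t^i$ is identical before and after the bypass, so only the equal-cost suffix is exchanged. Once that is spelled out, the lemma follows with no further computation.
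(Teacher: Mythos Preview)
Your proposal is correct and follows essentially the same approach as the paper: both arguments invoke the definition of bypass (an equal-cost replacement of a single agent's individually optimal path that leaves the other agents untouched) together with the additivity of the SIC cost to conclude that the joint cost is unchanged. Your version is in fact more careful than the paper's, which omits the explicit SIC decomposition, the induction over multiple bypassed agents, and the prefix/start-point distinction you flag; these are genuine refinements but not a different line of reasoning.
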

\begin{proof}
	For a vertex $v_k^i\in v_k$, we have an individual optimal path $\pi_{\phi}(v_{k}^i,v_F^i)\in \pi_{\phi}(v_k,v_F)$. When this path has conflicts with other agents' individual optimal pathes, if there exists another path $\pi_{\phi'}(q_k^i,q_F^i)\ne \pi_{\phi}(q_k^i,q_F^i)$ and $f(\pi_{\phi'}(q_k^i,q_F^i))=f(\pi_{\phi}(q_k^i,q_F^i))$ and $\pi_{\phi'}(q_k^i,q_F^i)$ has no conflicts with other agents' individual optimal pathes, we can bypass this conflict. Since $\pi_{\phi'}(q_k^i,q_F^i)$ has no conflicts with other agents' individual optimal pathes, the other path in $\pi_{\phi}(v_k,v_F)$ will not be influenced and their cost will not be changed. Considering $f(\pi_{\phi'}(q_k^i,q_F^i))=f(\pi_{\phi}(q_k^i,q_F^i))$, thus the cost of each agent's path is not changed. Therefore, bypass will not change the cost of the optimal path,  $f(\pi_{\phi'}(v_k,v_F))=f(\pi_{\phi}(v_k,v_F))$.
\end{proof}

\begin{lemma}
	If no solution exists, BPM* will terminate in finite time without returning a path.
\end{lemma}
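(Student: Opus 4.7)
The plan is to argue by a finiteness-of-work argument: even with the bypass and backpropagation machinery, BPM\textsuperscript{*} only ever does a bounded amount of work, so when no solution exists the \textit{open} list must eventually become empty and Algorithm~4 falls through to the final \textbf{return} ``No path exists.''

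First I would establish that the underlying search space is finite. Since $G^i=\{V^i,E^i\}$ is finite for every agent, the product graph $G=\prod_{i\in I}G^i$ is finite, and in particular the explored subgraph $G^{sch}\subseteq G$ has a bounded number of vertices. So across the whole execution, only finitely many distinct vertices $v_k$ can ever appear in the \textit{open} list.

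Next I would bound the number of times any single vertex $v_k$ can be (re)inserted into \textit{open}. A vertex is re-inserted only in two situations: (i) through \textbf{backprop} when its collision set $C_k$ strictly grows, and (ii) when a cheaper predecessor is discovered. For (i), $C_k\subseteq I$ and $|I|=n$, so $C_k$ can grow at most $n$ times before saturating. For (ii), since the bypass replacement preserves path cost (Lemma~2), the $g$-values that can be assigned to $v_k$ form a finite set drawn from the finite collection of cost values realizable along paths in the finite graph $G^{sch}$. Combining these, each $v_k$ is expanded only finitely often. Moreover, each expansion of $v_k$ triggers at most $|V_k^{nbh}|$ calls to \textbf{bypass}, and each call to \textbf{bypass} attempts at most one bypass per colliding agent; by Lemma~1 the number of distinct bypasses ever attempted for a given individually optimal segment is bounded by $U^i\le U$, so every invocation of \textbf{bypass} terminates in finitely many steps.

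Putting the pieces together: the total number of vertex expansions is bounded by $|V|\cdot(n+1)$ (collision-set growth plus one final expansion), each expansion performs a finite amount of bypass/backprop work, and \textbf{backprop} itself propagates along back-pointers only as long as $C_l\nsubseteq C_k$, which can happen only finitely often for the same reason. Hence the main \textbf{while} loop of Algorithm~4 executes only finitely many iterations. If no solution exists, then $v_F$ is never popped (since popping $v_F$ yields a valid path via \textbf{back\_track}), so the loop exits only when \textit{open} is empty, and BPM\textsuperscript{*} correctly returns ``No path exists.''

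The main obstacle I anticipate is ruling out a subtle loop between bypass and backpropagation: a bypass at $v_l$ could, in principle, invalidate bypasses previously accepted at descendants, forcing repeated re-expansion. I would handle this by observing that each successful bypass only injects a new fixed path $\pi_{\phi'}$ into $Q^\#$ without retracting existing ones, and that every failed bypass strictly enlarges some $C_k$; hence every re-expansion is witnessed either by a strict growth of a collision set (bounded by $n|V|$ globally) or by a strictly cheaper $g$-value (bounded by finiteness of realizable costs), both of which can occur only finitely often.
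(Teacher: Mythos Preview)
Your finiteness-of-work outline is in the same spirit as the paper's proof, but there are two gaps worth fixing.

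\textbf{First, the termination count.} You claim every re-expansion is witnessed either by a strict growth of some $C_k$ or by a strictly cheaper $g$-value. But a \emph{successful} bypass does neither: by Lemma~2 it leaves the cost unchanged, and by construction it leaves $C_{new}$ (hence the collision sets) untouched; yet \textbf{generate\_new\_path} still inserts the new path into the search graph and may put vertices back into \textit{open}. So ``(i) or (ii)'' is not an exhaustive enumeration, and your bound $|V|\cdot(n+1)$ undercounts the work. The paper handles this by counting modifications of $G^{sch}$ rather than re-insertions of individual vertices, and by charging each successful bypass against Lemma~1: per vertex there are at most $\sum_{i} U$ successful bypasses and at most $n-1$ collision-set enlargements, giving at most $\bigl[(\sum_{1\le i\le n}U)+(n-1)\bigr]\cdot|V|$ modifications of $G^{sch}$ overall. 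Once you fold the bypass contribution into your count (you already cite Lemma~1, just not in the tally), your argument goes through.

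\textbf{Second, the ``without returning a path'' clause.} You write: ``$v_F$ is never popped, since popping $v_F$ yields a valid path via \textbf{back\_track}.'' That is exactly the point that needs justification---why does \textbf{back\_track} produce a \emph{collision-free} path? The paper devotes a separate paragraph to this: a vertex $v_k$ with $\Psi(v_k)\neq\emptyset$ has its out-neighbours removed from $G^{sch}$ as soon as it enters $G^{exp}$, so no path through a collision vertex can ever be reconstructed. In your write-up you should at least point to line~13 of Algorithm~4 (the test $\Psi(v_l)=\emptyset$ guards the assignment of \texttt{back\_ptr}) and argue that the \texttt{back\_ptr} chain therefore only visits collision-free vertices. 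Without that, the implication ``popping $v_F$ $\Rightarrow$ a solution exists'' is assumed rather than proved.
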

\begin{proof}
	Assuming no solution exists. As part of BPM*, A* is run on the search graph $G^{sch}$. A* will explore all of $G^{sch}$ in finite time and conclude that no solution exists, except if the A* search is interrupted by a modification of $G^{sch}$. $G^{sch}$ is modified when the collision set of at least one vertex in $G^{sch}$ is changed or one bypass succeeds. We assume that $k(0=<k<=n,k\ne 1)$ agents failed to bypass(which means that they need to be added to the collision set), thus at most $n-k$ agents succeed to bypass(the other agents are not involved in collisions). When at least one agent succeeds to bypass, the modification adds a new path in $G^{sch}$. For each vertex $v$, this kind of modification can happens at most $\sum_{1\le i\le n}U$ times. When at least one agent failed to bypass, the modification adds one or more robots to the collision set, thus each collision set can be modified at most $n-1$ times, which means all of agents cannot bypass. The first modification after bypass failure must add at least two agents. Therefore, $G^{sch}$ can be modified at most $[(\sum_{1\le i\le n-k}U)+(k-1)]*|V|\le [(\sum_{1\le i\le n}U)+(n-1)]*|V|$ times. Thus if no solution exists, BPM* will always terminate in finite time.\par
	We now show that BPM* will never return an invalid path containing a robot-robot collision. A vertex $v_k$ has out-neighbours only if it is collision free, unless $v_k$ is not in the explored graph $G^{exp}$. Before BPM* will return a path passing through $v_k$, $v_k$ must be added to the open list, and thus to $G^{exp}$. When $v_k$ which is not collision free is added to the open list, $G^{sch}$ is modified to remove all out-neighbours of $v_k$, which removes any path passing through $v_k$ from $G^{sch}$. Therefore, BPM* will never return a path passing through a state at which agents collide. Thus, if no solution exists, BPM* will terminate in finite time without a path.
\end{proof}

\begin{lemma}
If an optimal path exists, BPM* will find the optimal path in finite time if one of two cases always hold:\par
\noindent\textbf{Case 1:}The search graph $G^{sch}$ contains an optimal path, $\pi_{*}(v_s,v_f)$. \par
\noindent\textbf{Case 2:}The search graph $G^{sch}$ contains a path $\pi(v_s,v_c)$ such that $g(\pi(v_s,v_c))+h(v_c)\le g(\pi_{*}(v_s,v_f))$, and $\exists v_b\in \pi(v_s,v_c)$ such that $C_{new}^c\nsubseteq C_b$.
\end{lemma}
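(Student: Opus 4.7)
The plan is to adapt the standard M* optimality argument to the BPM* setting, reducing the claim to ordinary A* optimality on the dynamically evolving search graph $G^{sch}$ and then invoking Lemma 3 to bound the number of graph modifications.

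First I would observe that between successive modifications of $G^{sch}$, BPM* behaves exactly as A* run on the current $G^{sch}$ with the SIC heuristic $h$, which is admissible (and consistent) by equation (3). Consequently, whenever Case 1 holds and the search then proceeds without further modification, A*'s standard optimality guarantee ensures that BPM* returns a path with cost $g(\pi_*(v_s,v_F))$, i.e.\ the optimal path.

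Next I would handle Case 2. The path $\pi(v_s,v_c)\in G^{sch}$ satisfies $g(\pi(v_s,v_c))+h(v_c)\le g(\pi_*(v_s,v_F))$, so by the A* priority-queue discipline some ancestor of $v_c$ (and eventually $v_c$ itself) must be popped from the open list before any goal node with strictly suboptimal $f$-value could terminate the search. When the successor of $v_c$ that exposes the conflict recorded in $C_{new}^c$ is processed, \textbf{bypass} (Algorithm 2) is invoked; if it succeeds, a new path is spliced into $G^{sch}$ via \emph{generate\_new\_path}, and by Lemma 2 this insertion preserves path costs, so the admissibility certificate of Case 2 is not invalidated. If bypass fails, \textbf{backprop} (Algorithm 1) is triggered, and the hypothesis $C_{new}^c\nsubseteq C_b$ guarantees that $C_b$ is strictly enlarged and $v_b$ is reinserted into the open list. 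In either outcome, $G^{sch}$ undergoes a genuine modification.

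Finally I would close the argument by finiteness: the proof of Lemma 3 already furnishes the bound $[(\sum_{1\le i\le n}U^i)+(n-1)]\cdot|V|$ on the total number of modifications $G^{sch}$ can ever undergo. Therefore Case 2 can be invoked only finitely many times, after which, since by hypothesis one of the two cases always holds, Case 1 must be in force; by the first step this yields the optimal path in finite time. The main obstacle I expect is the second step: rigorously verifying that the A* discipline on a mutating $G^{sch}$ still selects an ancestor of $v_c$ before any suboptimal termination can occur. This hinges on the admissibility of SIC (equation (3)), on Lemma 2 ensuring intervening bypasses do not raise the $f$-value of $\pi(v_s,v_c)$, and on the fact that every backprop re-insertion re-pushes the affected vertex with its correct $g$-value, so that the queue ordering remains consistent with A*'s optimality guarantee throughout.
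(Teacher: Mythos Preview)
Your proposal is correct and follows essentially the same route as the paper. The paper factors the argument into three auxiliary lemmas---Lemma~5 (Case~1 without Case~2 yields the optimal path via A* on $G^{sch}$), Lemma~6 (Case~2 forces a modification of $G^{sch}$ before any suboptimal return), and Lemma~7 (the modification bound from Lemma~3 makes Case~2 cease after finite time)---and your three steps are precisely these pieces presented inline, invoking the same ingredients (A* admissibility via SIC, Lemma~2 for cost-preservation under bypass, and the $[(\sum_i U^i)+(n-1)]\cdot|V|$ bound).
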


Here, Case 2 implies the existence of a path which has not been explored by BPM* that leads to a robot-robot collision at $v_c$, and which costs no more than $\pi_{*}(v_s,v_f)$. This situation can happen only because the change of the collision set rather than bypass, since bypass does not change the cost of a path (Lemma 2). If the path had been explored, $v_b$ and $v_c$ would have been added to the open list and thus to the explored graph $G^{exp}$. In this case, $C_b$ would include all robots involved in the collision at $v_c$, i.e. the robots in $C_{new}^c$.
To prove Lemma 2, we proceed by showing that if case 1 holds, the optimal path will be found unless a cheaper path containing a collision exists in the search graph $G^{sch}$, i.e., Case 2 holds (Lemma 5). We then show that BPM* will never explore a suboptimal path to the goal as long as Case 2 holds (Lemma 6), and that Case 2 will not hold after finite time (Lemma 7). We conclude by proving that either Case 1 or Case 2 will always hold, demonstrating that the optimal path will be found (Lemma 9).

\begin{lemma}
	If the search graph $G^{sch}$ contains an optimal path(i.e. case 1 holds), BPM* will find the optimal path, unless case 2 also holds.
\end{lemma}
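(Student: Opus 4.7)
The plan is to argue by contradiction. Suppose Case 1 holds throughout the execution of BPM* but BPM* nevertheless fails to return $\pi_{*}(v_s,v_f)$. I begin by recalling that BPM* coincides with A* on the dynamic graph $G^{sch}$ under the SIC heuristic, which is admissible by equation (3). Consequently, on any static snapshot of $G^{sch}$ that contains $\pi_{*}$, A* would certainly extract it. Any failure of BPM* must therefore be attributable to a modification of $G^{sch}$ that occurs during the search and that removes the optimal path from A*'s consideration.

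Next I would enumerate the two ways BPM* can mutate $G^{sch}$. The first is a successful bypass (Algorithm 2): by Lemma 2 this preserves path costs and only adds equal-cost alternatives, so it cannot prevent the A* wavefront from reaching $v_f$ along $\pi_{*}$. The second is the propagation of a collision-set update by \textbf{backprop} (Algorithm 1), which may reinsert a vertex into the open list and, more importantly, cause out-neighbours at previously expanded vertices to be removed from $G^{sch}$. Only modifications of the second type can prevent A* from completing $\pi_{*}$, so the argument reduces to tracking them.

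Consider, then, the first \textbf{backprop} modification that would obstruct A* from returning $\pi_{*}$. At that moment BPM* has just expanded some vertex $v_c\in G^{sch}$ at which a conflict is detected and at least one pair of agents fails to bypass, so that $C_{new}^{c}\neq\emptyset$. Applying the standard A* pop order with the admissible heuristic gives that the back\_ptr path $\pi(v_s,v_c)$ satisfies $g(\pi(v_s,v_c))+h(v_c)\le g(\pi_{*}(v_s,v_f))$. Moreover, because \textbf{backprop} has not yet spread $C_{new}^{c}$ along this path, there exists an ancestor $v_b\in \pi(v_s,v_c)$ with $C_{new}^{c}\nsubseteq C_b$. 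These two facts together are precisely the statement of Case 2, contradicting the assumption that Case 2 fails and completing the argument.

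I expect the main obstacle to lie in the bookkeeping of the third step: because $G^{sch}$ mutates in flight, ``the first \textbf{backprop} that would obstruct $\pi_{*}$'' must be defined with some care. In particular, one must verify that up to that moment every vertex popped from the open list still admits an A* continuation toward $v_f$ along $\pi_{*}$, and that the equal-cost alternatives introduced by earlier successful bypasses do not redirect the relevant back\_ptr chain in a way that invalidates the Case 2 witness $v_b$. A secondary delicate point is to confirm that if \emph{no} such obstructing \textbf{backprop} ever occurs, the standard A* invariant on the then-static $G^{sch}$ yields $v_f$ with $f$-value equal to $g(\pi_{*}(v_s,v_f))$, so that BPM* terminates by reconstructing $\pi_{*}$ through the back\_track step of Algorithm 4.
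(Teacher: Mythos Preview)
Your argument takes a genuinely different, operational route from the paper, and it contains a factual misstep that undermines the key case split. You claim that \textbf{backprop} ``may \ldots\ cause out-neighbours at previously expanded vertices to be removed from $G^{sch}$.'' This is not what Algorithm~1 does: \textbf{backprop} enlarges collision sets, and by the definition of limited neighbours in~(2) a larger $C_k$ only \emph{adds} out-neighbours of $v_k$; it never removes any. The only mechanism that prunes out-neighbours is adding a vertex with an actual agent--agent collision to $G^{exp}$ (see the paragraph following~(4)). So your dichotomy ``bypass vs.\ backprop'' does not isolate the relevant obstruction, and the notion of ``the first \textbf{backprop} that obstructs $\pi_{*}$'' is not well-posed.

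The paper's proof sidesteps all of this dynamic bookkeeping. It argues statically: if Case~1 holds, then A* on $G^{sch}$ (with the admissible SIC heuristic) returns a minimum-cost $v_s$--$v_f$ path in $G^{sch}$; if that path is not $\pi_{*}$, there is a strictly cheaper path $\pi_{cheaper}\subseteq G^{sch}$. Since $\pi_{*}$ is the cheapest \emph{collision-free} path, $\pi_{cheaper}$ must contain a collision vertex $v_k$ with $C_{new}^{k}\neq\emptyset$. Because $\pi_{cheaper}$ passes \emph{through} $v_k$, and collision vertices lose all out-neighbours once they enter $G^{exp}$, one must have $v_k\notin G^{exp}$ and hence $C_k=\emptyset$. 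Then $C_{new}^{k}\nsubseteq C_k$, so the single vertex $v_k$ plays the roles of both $v_b$ and $v_c$ in Case~2, and the cost bound is inherited from $\pi_{cheaper}$. This avoids entirely the delicate issues you flag in your final paragraph; you would do well to adopt this structural argument rather than tracking mutations of $G^{sch}$ over time.
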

\begin{proof}
	If Case 1 holds, running A* on $G^{sch}$ will find $\pi_{*}(v_s,v_f)$ in finite time, unless there exists a cheaper path $\pi_{cheaper}(v_s,v_f)\subseteq G^{sch}$, which we now show would satisfies the conditions for case 2 to hold. Because $\pi_{*}(v_s,v_f)$ is a minimal cost collision-free path, $\pi_{cheaper}(v_s,v_f)$ must contain an agent-agent collision. Therefore a vertex $v_k\in \pi_{cheaper}(v_s,v_f)$ must exist such that $C_{new}^c\ne \emptyset$, and $g(\pi_{cheaper}(v_s,v_k))+h(v_k)<g(\pi_{*}(v_s,v_f))$. The existence of a path through $v_k$ implies that $v_k \notin G^{exp}$, as a vertex containing agent-agent collisions has its outneighbors removed when added to the explored graph $G^{exp}$. Therefore, $C_k=\emptyset$. Since $C_{new}^c \nsubseteq C_k$, $v_k$ fulfills the roles of both $v_b$ and $v_c$ in the definition of Case 2. As a result, if Case 1 holds, BPM* will find $\pi_{*}(v_s,v_f)$, unless Case 2 also holds.
\end{proof}

\begin{lemma}
	If the search graph $G^{sch}$ contains an unexplored path cheaper than $g(\pi_{*}(v_s,v_f))$(i.e. case 2 holds), BPM* will not return a suboptimal path.
\end{lemma}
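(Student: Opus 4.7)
The plan is to adapt the classical A* optimality argument to the dynamically constructed graph $G^{sch}$. I first observe that, in between back-propagations and successful bypasses, BPM* is literally A* on the current snapshot of $G^{sch}$, so I can argue by contradiction. Suppose that while Case 2 holds, BPM* terminates by popping $v_F$ and returning a path $\pi$ with $g(\pi) > g(\pi_{*}(v_s,v_F))$; then at that moment every vertex still in the open list has $f$-value at least $g(\pi) = f(v_F)$.

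Next, I will use the unexplored path $\pi(v_s,v_c) \subseteq G^{sch}$ promised by Case 2 to exhibit an open-list vertex that breaks this inequality. Since the path is unexplored, let $v$ be the first vertex along $\pi(v_s,v_c)$ starting from $v_s$ that has not been expanded; its predecessor lies in $G^{exp}$ and, when that predecessor was expanded, $v$ was inserted into the open list with a $g$-value bounded by the prefix cost of $\pi(v_s,v)$. Appealing to consistency of the SIC heuristic $h$, $f$-values are non-decreasing along paths of $G^{sch}$, so
\begin{equation}
f(v) \le g(\pi(v_s,v_c)) + h(v_c) \le g(\pi_{*}(v_s,v_F)) < f(v_F).
\end{equation}
Because A* always pops the open-list vertex of smallest $f$-value, $v$ would have been expanded before $v_F$, contradicting the assumption that $v_F$ was popped first.

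The main obstacle I expect is defending the two inequalities above, since both require care with the evolving graph $G^{sch}$. Concretely, I have to confirm that when $v$'s open-list entry was created its $g$-value reflects the prefix of the specific path $\pi(v_s,v_c)$ rather than a stale value from an earlier snapshot before a bypass was applied; here Lemma 2 is the right hammer, since bypass preserves individual path costs and therefore the $g$-values A* consults. I also have to verify that the SIC heuristic remains monotone along paths of the modified $G^{sch}$, which follows from the admissibility of SIC together with the fact that each per-agent bypass leaves the remaining individual-optimal costs unchanged. A minor subtlety is that a priori some vertex on $\pi(v_s,v_c)$ might be a collision vertex whose out-neighbours have been pruned, but choosing $v$ as the first unexplored vertex on the path guarantees that its predecessor was still collision-free when it was expanded, so the pruning rule cannot block $v$ from having entered the open list.
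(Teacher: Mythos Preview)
Your argument is correct and is essentially the same approach the paper takes: both rest on the standard A* property that a vertex with smaller $f$-value is expanded before $v_F$ can be popped at a suboptimal cost, applied to the current snapshot of $G^{sch}$. You spell out the contradiction in full (first unexpanded vertex on $\pi(v_s,v_c)$, consistency of the SIC heuristic, strict inequality against $f(v_F)$), whereas the paper states the conclusion in one line---``$\pi(v_s,v_c)$ will be explored by A* \ldots\ before A* finds any path to $v_f$ that costs more than $g(\pi_{*}(v_s,v_f))$''---and then adds an observation you omit: once that cheaper path is explored, $C_b$ is updated via back-propagation, $G^{sch}$ is modified, and the A* search is effectively restarted. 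That extra sentence is not needed to establish the lemma as stated, but it is what links this lemma to Lemma~7 in the overall completeness/optimality chain; your purely static contradiction establishes the same conditional without tracking the mechanism. Your defensive remarks about bypass not disturbing $g$-values (via Lemma~2) and SIC remaining consistent on the modified graph are reasonable hygiene that the paper simply takes for granted.
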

\begin{proof}
	If Case 2 holds, then $\pi(v_s,v_c)$ will be explored by A* and added to the explored graph $G^{exp}$ before A* finds any path to $v_f$ that costs more than $g(\pi_{*}(v_s,v_f))$. Adding $\pi(v_s,v_c)$ to $G_{exp}$ will modify $C_b$. $G^{sch}$ will then be modified to reflect the new limited neighbours of $v_b$ and A* will be restarted. Therefore, BPM* will never return a suboptimal path as long as Case 2 holds.
\end{proof}

\begin{lemma}
	The search graph $G^{sch}$ will cease to contain any unexplored path cheaper $g(\pi_{*}(v_s,v_f))$(i.e. Case 2 will cease to hold) after finite time.
\end{lemma}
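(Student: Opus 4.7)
The plan is to reuse the finite-modification bookkeeping from Lemma 3, now applied to the specific trigger that keeps Case 2 alive. First I would observe that whenever Case 2 holds there is a vertex $v_c$ on an unexplored path $\pi(v_s,v_c) \subseteq G^{sch}$ with $g(\pi(v_s,v_c))+h(v_c) \le g(\pi_{*}(v_s,v_f))$ and a vertex $v_b \in \pi(v_s,v_c)$ with $C_{new}^c \nsubseteq C_b$. Because A* on $G^{sch}$ expands vertices in order of increasing $f$-value, and because every prefix of $\pi(v_s,v_c)$ has $f$-value at most $g(\pi_{*}(v_s,v_f))$, A* must eventually reach $v_c$ before committing to any goal-reaching path of cost strictly larger than $g(\pi_{*}(v_s,v_f))$. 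The act of expanding $v_c$ then forces a modification of $G^{sch}$: either \textbf{bypass} succeeds at $v_c$ (inserting a new individually optimal subpath into $G^{sch}$), or \textbf{bypass} fails and the subsequent \textbf{backprop} propagates $C_{new}^c$ back along the back-set and strictly enlarges $C_b$, which must be a strict enlargement since $C_{new}^c \nsubseteq C_b$ by assumption.

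Next I would bound the total number of such modifications independently of when they occur, mirroring the counting already carried out in the proof of Lemma 3. Collision sets are monotone nondecreasing subsets of $I$, so at each vertex $v \in V$ a collision-set update can happen at most $n-1$ times. By Lemma 1, each agent $a^i$ admits at most $U^i \le U$ distinct bypasses of any given individually optimal subpath, so each vertex in $G^{sch}$ can absorb at most $nU$ new bypass insertions overall. Summing over $|V|$ vertices yields a global bound of at most $[\,nU + (n-1)\,]\cdot |V|$ modifications to $G^{sch}$, which is finite.

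I would then close the argument by combining the two ingredients: every interval during which Case 2 holds forces at least one modification of $G^{sch}$, each modification is strictly monotone (a collision set grows, or a previously unrecorded bypass is added), and the total supply of such modifications is finite. Consequently, after finite time no unexplored path in $G^{sch}$ can simultaneously be cheaper than $g(\pi_{*}(v_s,v_f))$ and witness $C_{new}^c \nsubseteq C_b$ at some $v_b$ on it, which is precisely the assertion that Case 2 ceases to hold.

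The main obstacle I expect is rigorously arguing that every epoch of Case 2 genuinely consumes one of the finitely many available modifications, rather than being a ``silent'' recurrence in which A* merely re-examines already-explored territory without touching $C_b$ or producing a new bypass. The decisive handle is the strict non-containment $C_{new}^c \nsubseteq C_b$ in the definition of Case 2, together with the fact that \textbf{backprop} launched from $v_c$ propagates through the back-pointer chain all the way to $v_b$, guaranteeing that the very next interaction of BPM* with the pair $(v_c,v_b)$ cannot be a no-op. Once this non-triviality is pinned down, the pigeonhole-style counting above closes the lemma.
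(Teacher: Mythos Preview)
Your proposal is correct and follows essentially the same approach as the paper: bound the total number of modifications to $G^{sch}$ by the same quantity $[\,nU+(n-1)\,]\cdot|V|$ established in Lemma~3, observe that Case~2 requires some $C_b\subsetneq I$, and conclude that Case~2 cannot persist past the last modification. Your write-up is actually more careful than the paper's on one point: you explicitly argue that each epoch in which Case~2 holds must consume a modification (via A* reaching $v_c$ and triggering either a bypass insertion or a strict enlargement of $C_b$ through \textbf{backprop}), whereas the paper covers this only with the terse remark that ``the time between any two successive modifications of $G^{sch}$ is finite.''
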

\begin{proof}
	For Case 2 to hold, there must be at least one vertex $v_b$ such that $C_b$ is a strict subset of I. $G^{sch}$ can be modified at most $[(\sum_{1\le i\le n}U)+(n-1)]*|V|$ times before all collision sets are equal to I. Therefore, after a finite number of modifications of $G^{sch}$ Case 2 cannot hold. A* will fully explore any finite graph in finite time, implying that the time between any two successive modifications of $G^{sch}$ is finite. Therefore, Case 2 will not hold after finite time.
\end{proof}

With these auxiliary results in hand, the proof of Lemma 4 is as follows. If Case 1 holds, then BPM* will find the optimal path in finite time, unless case 2 also holds (Lemma 5). While Case 2 holds, BPM* will not return a suboptimal path (Lemma 6), and case 2 cannot hold after finite time (Lemma 7). Therefore, after finite time, only Case 1 will hold, implying that BPM* will find the optimal path in finite time.
To complete the proof of the completeness and optimality of BPM*, we must show that Case 1 or Case 2 will always hold. To do so, we first need an auxiliary result (Lemma 8) showing that the optimal path for some subset of robots costs no more than the joint path taken by those robots in the optimal, joint path for the entire set of robots. The auxiliary result is used to demonstrate that an optimal path can be found by combining optimal pathes for disjoint subsets of robots.

Let $\pi_{\Omega}^{'}(v_k,v_f)$ be the path constructed by combining the optimal path for a subset $\Omega \subset I$ of robots with the individually optimal pathes for the robots in $I\backslash\Omega$.

\begin{lemma}
	If the configuration graph contains an optimal path $\pi_{*}(v_k,v_f)$, then $\forall\Omega\subset I$, $g(\pi_{\Omega}^{'}(v_k,v_f))\le g(\pi_*(v_k,v_f))$. Furthermore, if $\Omega_{1} \subset \Omega_{2}$, then $g(\pi_{\Omega_{1}}^{'}(v_k,v_f))\le g(\pi_{\Omega_{2}}^{'}(v_k,v_f))$.
\end{lemma}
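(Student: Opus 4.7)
The plan is to reduce both inequalities to two elementary monotonicity facts about the sum-of-individual-costs objective: (a) restricting a collision-free joint path to any subset of agents yields a collision-free joint path for that subset, so the joint optimum computed over $\Omega$ alone costs at most the $\Omega$-restriction of any wider joint solution; and (b) each individually optimal path $\pi_\phi^i$ is a lower bound on the cost agent $a^i$ can incur in any joint solution. Both facts are immediate from the definitions, with (a) relying on the fact that the conflict function $\Psi$ only registers pairwise conflicts, so dropping agents can never create new ones.

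For the first inequality I would decompose $g(\pi_\Omega'(v_k,v_f)) = g(\pi_{*,\Omega}(v_k,v_f)) + \sum_{i\in I\setminus\Omega} g(\pi_\phi^i(v_k^i,v_F^i))$, where $\pi_{*,\Omega}$ denotes the joint-optimal path taken over the agents in $\Omega$ alone. By (a), the restriction of $\pi_*(v_k,v_f)$ to $\Omega$ is a feasible joint path for this subproblem, so the first summand is at most $\sum_{i\in\Omega} g(\pi_*^i)$; by (b), each term of the second summand is at most the corresponding $g(\pi_*^i)$. Summing the bounds recovers exactly $g(\pi_*(v_k,v_f))$.

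For the monotonicity statement, fix $\Omega_1\subset \Omega_2$ and cancel the common terms indexed by $I\setminus\Omega_2$; it then suffices to prove $g(\pi_{*,\Omega_1}) + \sum_{i\in\Omega_2\setminus\Omega_1} g(\pi_\phi^i) \le g(\pi_{*,\Omega_2})$. I would apply (a) to the restriction of $\pi_{*,\Omega_2}$ to $\Omega_1$ to bound $g(\pi_{*,\Omega_1})$ by $\sum_{i\in\Omega_1} g(\pi_{*,\Omega_2}^i)$, and apply (b) to each $i\in\Omega_2\setminus\Omega_1$ to bound $g(\pi_\phi^i)$ by $g(\pi_{*,\Omega_2}^i)$; the two sums together telescope into $g(\pi_{*,\Omega_2})$, closing the argument. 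An easy corollary of this telescoping is that the first inequality can also be recovered as the special case $\Omega_2=I$, with $\pi_{*,I}=\pi_*$ and $\pi_\Omega'=\pi_{\Omega_1}'$.

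I expect the main obstacle to be purely notational: the lemma writes ``optimal path for a subset $\Omega$'' without specifying the ambient problem, so the first step of the write-up will be to fix unambiguously that this means the cheapest collision-free joint path among the agents of $\Omega$ (ignoring those in $I\setminus\Omega$ entirely), and to record that under this reading step (a) is truly immediate from the pairwise nature of $\Psi$ introduced in the subdimensional-expansion setup. Once this is pinned down, everything else is bookkeeping on sums of per-agent costs.
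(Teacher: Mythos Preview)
Your proposal is correct and follows essentially the same argument as the paper: both compare, agent by agent, the $\Omega$-optimal path against the $\Omega$-restriction of the larger optimal path (your fact~(a)) and the individually optimal paths against the corresponding components (your fact~(b)), then sum. Your write-up is somewhat more explicit---in particular you spell out that (a) relies on conflicts being pairwise, which the paper uses implicitly---but the structure and key ideas are identical.
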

\begin{proof}
	If $\pi_*(v_k,v_f)$ from an arbitrary $v_k$ to $v_f$ exists in G, then for any subset of robots $\Omega$ there exists an optimal path $\pi_*^{\Omega}(v_k^{\Omega},v_f^{\Omega})$ which costs no more than the path taken by those robots in $\pi_*(v_k,v_f)$. Let $\bar{\Omega}=I\backslash \Omega$ be the complement of $\Omega$ and $\pi_{\emptyset}^{\bar{\Omega}}(v_k^{\bar{\Omega}},v_f^{\bar{\Omega}})$ costs no more than the pathes taken by the robots in $\bar{\Omega}$ in $\pi_*(v_k,v_f)$ by the construction of the individual policies. A path for all robots in I, $\pi_{\Omega}^{'}(v_k,v_f)$, is then constructed by having each robot in $\Omega$ follow its path in $\pi_*^{\Omega}(v_k^{\Omega},v_f^{\Omega})$, while each robot in $\bar{\Omega}$ follows its path in $\pi_{\emptyset}^{\bar{\Omega}}(v_k^{\bar{\Omega}},v_f^{\bar{\Omega}})$. Since the individual path for each robot in $\pi_{\Omega}^{'}(v_k,v_f)$ costs no more than the path for the same robot in $\pi_*{v_k,v_f}$, $g(\pi_{\Omega}^{'}(v_k,v_f))\le g(\pi_*(v_k,v_f))$. By the same logic, if $\Omega_{1} \subset \Omega_{2}$, then $g(\pi_{\Omega_{1}}^{'}(v_k,v_f))\le g(\pi_{\Omega_{2}}^{'}(v_k,v_f))$.
\end{proof}

\begin{lemma}
	The search graph $G^{sch}$ will always contain an optimal path (i.e. Case 1 will hold) or an unexplored path which costs no more than the optimal path (i.e. Case 2 will hold) at all points in the execution of BPM*.
\end{lemma}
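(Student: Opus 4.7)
The plan is to prove the statement non-inductively using Lemma 8 as the central tool, mirroring the structure of the analogous argument for M*. At any moment during the execution of BPM*, I would single out a subset $\Omega\subseteq I$ of agents depending on the current collision sets, and then exhibit a concrete path in $G^{sch}$ constructed by splicing together an optimal joint path for the agents in $\Omega$ with the current individually optimal (or bypassed individually optimal) paths of the agents in $I\setminus\Omega$. The natural choice of $\Omega$ is the union, over the vertices on the candidate path, of the current collision sets, because agents outside those collision sets are forced by subdimensional expansion with bypass to move along paths induced by $\phi$ (possibly after a bypass), and hence are already represented in $G^{sch}$.

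Next I would verify that this combined path $\pi'_{\Omega}(v_s,v_f)$ actually lies in $G^{sch}$. This follows from the construction of $G^{sch}=G^{exp}\cup G^{nbh}\cup G^{\phi}$: the $\Omega$-component is explored exhaustively once the conflicts force the relevant agents into the collision set, while the $(I\setminus\Omega)$-component is exactly the $G^{\phi}$ piece of the search graph, where bypass (by Lemma 2) preserves costs without introducing new paths of higher cost. Applying Lemma 8 then yields $g(\pi'_{\Omega}(v_s,v_f))\le g(\pi_*(v_s,v_f))$.

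I would then split on whether $\pi'_{\Omega}(v_s,v_f)$ is collision-free. If it is, then because its cost is at most the optimal cost, it itself is an optimal path contained in $G^{sch}$, and Case 1 holds. If it is not collision-free, let $v_c$ be a vertex on the path where a collision occurs, so that $C^c_{new}\ne\emptyset$. The existence of $v_b$ on the sub-path from $v_s$ to $v_c$ with $C^c_{new}\nsubseteq C_b$ follows because, if no such $v_b$ existed, every vertex of this sub-path would already have its collision set containing the colliding agents of $v_c$; but then subdimensional expansion would have removed the limited-neighbour edges that make the path exist in $G^{sch}$ (by the out-neighbour removal step invoked in the proof of Lemma 3), contradicting the path's presence. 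Hence Case 2 holds.

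The main obstacle will be the careful bookkeeping around the interaction between bypass and the construction of $\pi'_{\Omega}(v_s,v_f)$: a bypass replaces one individually optimal path by another of equal cost, so the $(I\setminus\Omega)$-portion of the spliced path is not literally unique and may evolve as BPM* runs. I would resolve this by appealing to Lemma 2 to fix the cost accounting, and by using Lemma 1 (finitely many bypasses per agent) together with the algorithm's invariant that once an agent is outside $\Omega$ at least one bypass variant of its individually optimal tail is maintained in $G^\phi\subseteq G^{sch}$. Everything else is a routine combination of Lemma 8's inequality with the definitions of Cases 1 and 2.
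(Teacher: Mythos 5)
There is a genuine gap at the heart of your construction: the claim that the spliced path $\pi'_{\Omega}(v_s,v_f)$, built from one \emph{fixed} subset $\Omega$ (the union of collision sets along the path), actually lies in $G^{sch}$. The search graph only guarantees, at each vertex $v_k$, that the available successors are the \emph{limited neighbours} determined by the local collision set $C_k$: only agents in $C_k$ may deviate from $\phi$ there, everyone else must follow its (possibly bypassed) individual policy. If $\Omega\supsetneq C_k$ at some vertex on your path, the successor demanded by the jointly optimal path for $\Omega$ generally requires an agent in $\Omega\setminus C_k$ to deviate at $v_k$, and that successor is simply not present in $G^{sch}$. (There is also a circularity: you define $\Omega$ as the union of collision sets over the vertices of the candidate path, but the candidate path is itself defined via $\Omega$.) This is precisely why the paper's proof does not use a single $\Omega$: it builds the witness path $\pi''(v_s,v_f)$ vertex by vertex, taking at each $v_m$ the successor in $\pi'_{C_m}(v_m,v_f)$ for the \emph{local} set $C_m$, and then chains the cost bound using the monotonicity clause of Lemma~8 together with the invariant $C_l\subseteq C_k$ along explored edges; it must then separately treat the edge case of a vertex $v_k\notin G^{exp}$ whose successor lies in $G^{exp}$, where $C_l\subseteq C_k$ can fail and the path is rerouted along $\pi_{\phi}(v_l,v_F)$. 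Your identification of the $(I\setminus\Omega)$-portion with $G^{\phi}$ also conflates a per-agent decomposition with the decomposition $G^{exp}\cup G^{nbh}\cup G^{\phi}$ of the \emph{joint} search graph, so it cannot certify membership in $G^{sch}$ either.

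Your argument for the Case~2 witness $v_b$ with $C_{new}^c\nsubseteq C_b$ is also not sound as stated. The out-neighbour removal you invoke is triggered when a vertex with an actual collision ($C_{new}\ne\emptyset$) is added to $G^{exp}$, and it removes the out-neighbours of \emph{that} vertex; it does not remove paths leading to $v_c$ merely because the collision sets of their vertices contain the colliding agents, so no contradiction follows if every $C_b$ on the sub-path contained $C_{new}^c$. The paper obtains the property by construction instead: the successor of $v_b$ is chosen from $\pi'_{C_b}(v_b,v_F)$, in which the agents of $C_b$ follow a jointly optimal, mutually collision-free path, so any collision at the next vertex must involve an agent outside $C_b$, giving $C_{new}^c\nsubseteq C_b$ directly. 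To repair your proposal you would essentially have to abandon the fixed-$\Omega$ splice and adopt the local, vertex-by-vertex recombination.
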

\begin{proof}
We proceed by showing that the limited neighbours of each vertex in $G^{sch}$ are sufficient to construct either the optimal path, or some unexplored, no more expensive path. Consider the vertex $v_k\in G^{sch}$ with collision set $C_k$. The successor of $v_k$ in $\pi_{C_k}^{'}(v_k,v_f)$, $v_l$, is a limited neighbour of $v_k$ by the definition of the limited neighbours. Since $C_l\subseteq C_k$, Lemma 8 implies
	\begin{equation}
	\begin{aligned}
	g(\pi_{C_k}^{'}(v_k,v_l))+g(\pi_{C_l}^{'}(v_l,v_f))\le \\
	g(\pi_{C_k}^{'}(v_k,v_f)) \le g(\pi_{*}(v_k,v_f))
	\end{aligned}
	\end{equation}

We apply the above bound vertex by vertex from the initial vertex to show that a path $\pi^{''}(v_s,v_f)\in G^{sch}$ can be constructed which satisfies either Case 1 or Case 2. The successor of the $m'$th vertex in $\pi^{''}(v_s,v_f)$ is the successor of $v_m$ in $\pi_{C_m}^{'}(v_m,v_f)$. Applying (5) gives the bound $g(\pi^{''}(v_s,v_f))\le g(\pi_{C_s}(v_s,v_f))\le g(\pi_*(v_s,v_f))$. If $\pi^{''}(v_s,v_f) = \pi_*(v_s,v_f)$ then Case 1 is satisfied. Otherwise, there is a vertex $v_c \in \pi^{''}(v_I,v_F)$ such that $C_{new}^c\ne \emptyset$. Let $v_b$ be the predecessor of $v_c$, which implies that $v_c$ lies in  $\pi_{C_b}^{'}(v_b,v_F)$. Because by construction the agents in $C_b$ do not collide with one another in  $\pi_{C_b}^{'}(v_b,v_F)$, $C_{new}^c\nsubseteq C_b$. Since $f(\pi^{''}(v_I,v_c))+h(v_c,v_F)\le f(\pi^{''}(v_I,v_F))\le f(\pi_*(v_I,v_F))$, Case 2 is satisfied.

If Case 1 does not hold, there is an edge case which must be considered. If $\pi^{''}(v_I,v_F)$ contains a vertex $v_k\notin G^{exp}$ with a successor $v_l\in G^{exp}$, $C_l$ may not be a subset of $C_k$, because no path exists from $v_k$ to $v_l$ in the explored graph $G^{exp}$, so the bound given by (5) does not apply. However, in this case the path induced by $\phi$ from $v_l$ must terminate at some vertex $v_c$ with $C_{new}^c\ne \emptyset$. We construct a new path by following $\pi^{''}(v_I,v_F)$ to $v_l$, and then following $\pi_{\phi}(v_l,v_F)$. The sum of the cost of this path and $h(v_c,v_F)$ must be less than $f(\pi_*(v_I,v_F))$, and $C_{new}^c\nsubseteq C_k$, so Case 2 still holds.

Therefore, the search graph $G^{sch}$ will always contain an optimal path (i.e. Case 1 will hold) or an unexplored path which costs no more than the optimal path (i.e. Case 2 will hold) at all points in the execution of BPM*.
\end{proof}

\begin{theorem}
	BPM* is complete and optimal.
\end{theorem}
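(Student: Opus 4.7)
The plan is to split the statement into its two halves (completeness when no solution exists, and completeness plus optimality when a solution exists) and show that each half is an almost immediate consequence of the auxiliary lemmas already established. I will not need to reason directly about the BPM* pseudocode again; the lemmas have already digested that work.

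First I would dispatch the unsolvable case. If the MAPF instance admits no feasible joint path, then Lemma 3 states verbatim that BPM* halts in finite time without returning a path. So completeness on the negative side is nothing more than a citation of Lemma 3.

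Next I would handle the case where an optimal joint path $\pi_*(v_I,v_F)$ exists. The plan is to drive the argument through Lemma 9 as the engine and use Lemmas 5, 6, 7 as the moving parts. By Lemma 9, at every point during execution either Case 1 holds (the current search graph $G^{sch}$ already contains an optimal path) or Case 2 holds (it contains an unexplored path no more expensive than $\pi_*(v_I,v_F)$). By Lemma 7, Case 2 cannot persist: after finitely many modifications of $G^{sch}$ (bounded by $[(\sum_{1\le i\le n}U)+(n-1)]\cdot|V|$ via Lemma 1), Case 2 ceases to hold. By Lemma 6, as long as Case 2 does hold BPM* never commits to a suboptimal path to $v_F$; so no wrong answer can slip out during the transient phase. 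Once only Case 1 holds, Lemma 5 guarantees that the A* search underlying BPM* on $G^{sch}$ locates the optimal path in finite time. Chaining these three facts yields that BPM* terminates after finitely many total modifications of $G^{sch}$ by returning an optimal path.

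Finally I would combine the two cases into the single statement of Theorem 1: in every instance BPM* terminates in finite time, it never returns a path containing a conflict (the second half of Lemma 3's proof), and whenever it returns a path that path is optimal. I do not expect a genuine obstacle here; essentially all the work was absorbed into Lemmas 1--9, and the remaining task is bookkeeping. The one point that needs a little care is ensuring that the finiteness of $G^{sch}$-modifications actually bounds the overall running time, which in turn uses both Lemma 1 (finitely many bypasses per agent) and the fact that between successive modifications A* itself runs on a finite graph and therefore takes finite time; this is the subtlest link in the chain but is already spelled out inside the proof of Lemma 7.
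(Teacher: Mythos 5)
Your proposal is correct and follows essentially the same route as the paper: the unsolvable case is dispatched by Lemma 3, and the solvable case combines Lemma 9 with the Case-1/Case-2 machinery of Lemmas 5--7 (which the paper packages as Lemma 4 and cites directly, whereas you inline its proof). The only cosmetic difference is that inlining, plus your explicit remark that Lemma 1 and the finiteness of $G^{sch}$ underlie the termination bound, which the paper leaves inside the proofs of Lemmas 3 and 7.
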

\begin{proof}
If the configuration graph $G$ does not contain an optimal path, then BPM* will terminate in finite time without returning an invalid path(Lemma3). If $G$ does contain an optimal path, then the search graph must always contain either the optimal path, or an unexplored path which costs no more than the optimal path (Lemma 9), which implies that then BPM* will find the optimal path in finite time (Lemma 4). BPM* will thus find the optimal path in finite time, if one exists, or terminate in finite time if no path exists. Therefore, BPM* is complete and optimal.
\end{proof}

\section{Experiments}

Notice that both M* and BPM* are complete and optimal. This means that both algorithms will output the optimal solution if exists or no solution if infeasible for a MAPF problem given sufficient amount of time. The goal of our experiments is to evaluate the efficiency of our algorithm in solving large MAPF problems. To this end, we conducted experiments on the following benchmark problems.

In these problems, the map is set as a grid map, and agents can only move from the current node to their four neighbor nodes. When generating problem instances, the density of agents for each given number of agents is guaranteed to be an average of one agent per 100 grid points, so that the number of grid points in the map is determined. Because the agent density of all the generated problems is the same, we only need to analyze the impact of the number of agents on the performance of the tested algorithms without considering the impact of the density. When a grid map is generated, each grid point has a 20\% independent probability of turning into an obstacle. Given a problem instance, each agent's starting position grid point and goal position grid point are randomly selected. Additionally, we ensured that the selected grid points are non-obstacle grid point and there exists at least one feasible path between each agent's starting position and goal position. We tested on this domain with constant agent density that contains 21 groups of problems, where each group consists of 100 problems.

Similar to M* and rM* \cite{b8}, we further improved BPM* by recursion and produced rBPM*. We ran all the algorithms on the same problem instances and reported the success rates and the solving speed. Here, the upper limit of time to solve a MAPF instance is set to be 5 minutes. All algorithms are implemented in C++ and run on a PC with an i5-6300hq CPU, with 2.30Ghz CPU frequency and 8GB memory size.

\begin{figure}[t]
	\centering
	\subfloat[Success rate]{
		\includegraphics[width=0.45\linewidth]{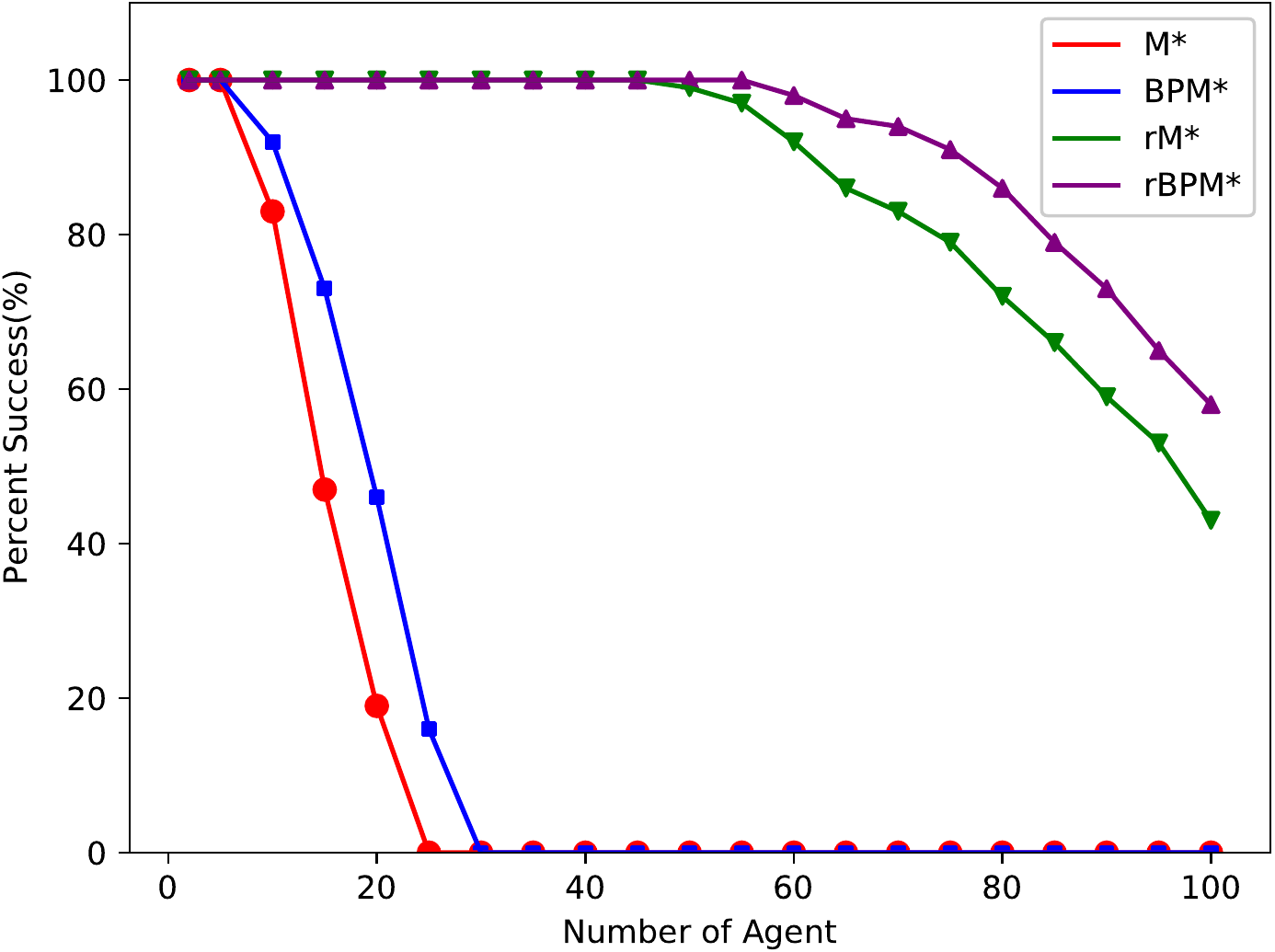}}
	\label{1a}\hfill
	\subfloat[Median time]{
		\includegraphics[width=0.45\linewidth]{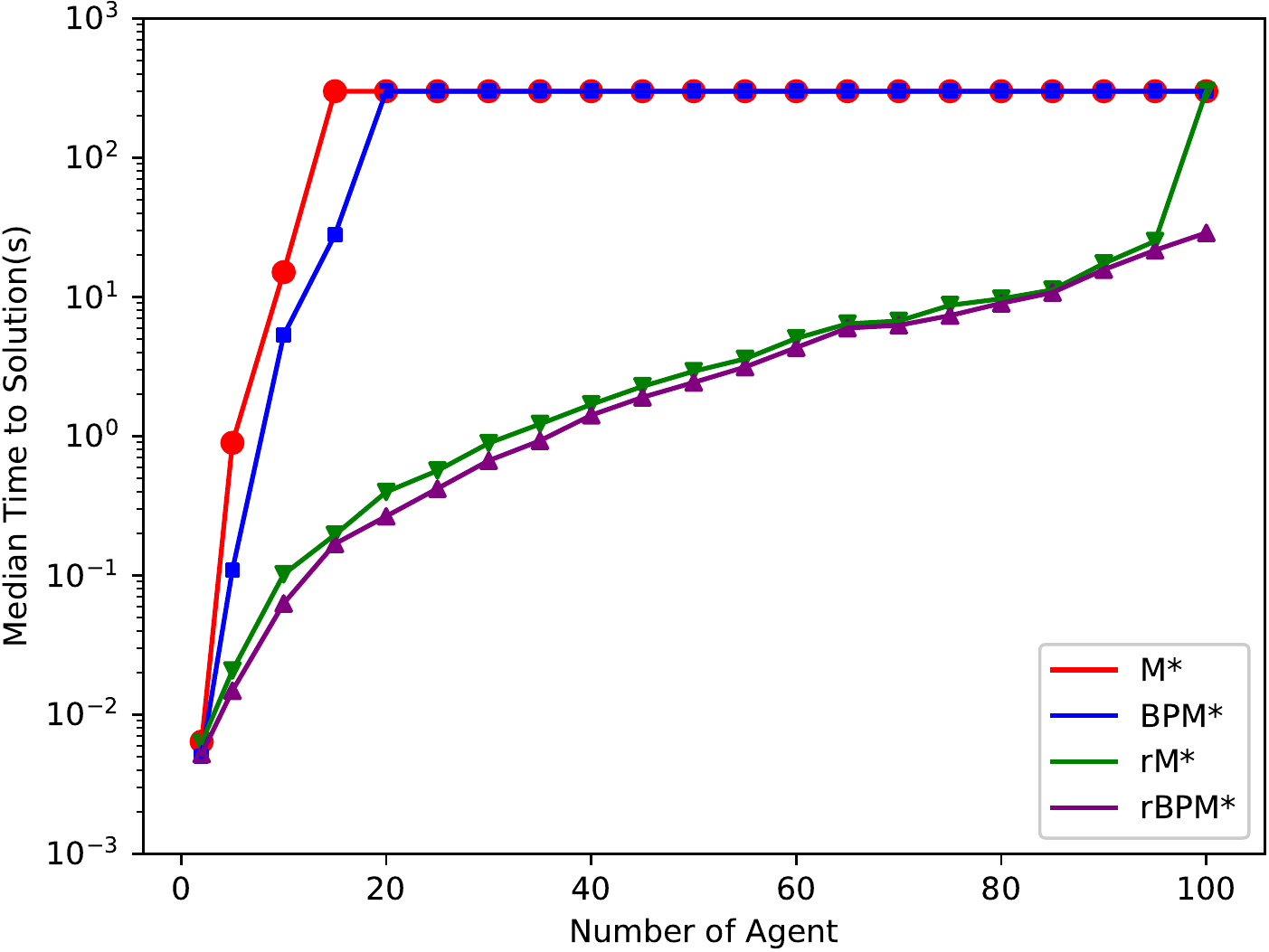}}
	\label{1b}\\
	\caption{Experimental results.}
	\label{fig1}
\end{figure}

As shown by Figure 4, BPM* outperforms M* in terms of the success rate especially for large problems with many agents. In terms of solving time, BPM* also showed competitive performance. For the variants with recursion, both rM* and rBPM* gained significant improvements comparing to M* and BPM* respectively. However, rBPM* still achieved better performance than rM*. This confirms the advantage of our approach comparing to the leading MAPF method.

\section{Conclusions}

This paper proposes BPM*, which is an implementation of subdimensional expansion with bypass. We proved that BPM* is complete and optimal, which are critical for solving MAPF problems. We conducted experiments on several MAPF problems to evaluate the efficiency of BPM*. Our experiments confirm the advantage of BPM* comparing to the state-of-the-art. As shown in the experiments, BPM* can be improved by recursion. In future work, we plan to integrate BPM* with other MAPF techniques and test our approach on large real-world applications.

\bibliography{citations}

\end{document}